\newcommand {\ent} {\mathrel{{\scriptstyle\mid\!\sim}}}
\newcommand {\esiste} {\exists}
\newcommand {\sx} {\langle}
\newcommand {\dx} {\rangle}
\newcommand {\emme} {\mathcal{M}}
\newcommand {\tc} {\mid}
\newcommand {\vuoto} {\emptyset}
\newcommand{\tip}{{\bf T}}
\newcommand{\alc}{\mathcal{ALC}}
\newcommand{\alct}{\mathcal{ALC}+\tip}
\newcommand{\alctr}{\mathcal{ALC}+\tip_{R}}
\newcommand{\el}{\mathcal{EL}}
\newcommand{\elbot}{\mathcal{EL}^{\bot}}
\newcommand{\eltm}{\mathcal{EL}^{\bot} \tip_{min}}
\newcommand{\sroel}{\mathcal{SROEL}(\sqcap,\times)}
\newcommand{\sroelrt}{\mathcal{SROEL}(\sqcap,\times)^{\Ra}\tip}
\newcommand {\lingconc} {\mathcal{S}}
\newcommand{\wARC}{w^{\footnotesize{A \sqsubseteq \exists R.C}}}
\newcommand{\auxARC}{{aux^{\footnotesize{A \sqsubseteq \exists R.C}}}}
\newcommand{\prj}{{\iota}}
\newcommand{\be}{\begin{enumerate}}
\newcommand{\ee}{\end{enumerate}}
\newcommand{\hide}[1]{}
\def \cases{\left \{\begin{array}{l}}
\def \endcases{\end{array}\right .}
\newcommand {\Ra} {{\bf R}}
\newcommand {\Ri} {\Rightarrow}
\newcommand {\bes} {\begin{description}}
\newcommand{\ens} {\end{description}}
\newcommand {\beq} {\begin{quote}}
\newcommand {\enq} {\end{quote}}
\newcommand {\bit} {\begin{itemize}}
\newcommand {\enit} {\end{itemize}}
\def \Ri{\Rightarrow}
\newcommand{\dlltm}{\mbox{\em DL-Lite}_{\mathit{c}}\tip_{min}}
\begin{document}
\bibliographystyle{plain}

\title{Defeasible Reasoning in ${\cal SROEL}$: 
from Rational Entailment to Rational Closure}


\author{
Laura Giordano \and Daniele Theseider Dupr\'e }





\institute{DISIT - Universit\`a del  Piemonte Orientale, Alessandria, Italy - \email{laura.giordano@uniupo.it, dtd@di.unipmn.it}
}

\maketitle

\begin{abstract}
In this work we study a rational extension $\sroel^{\Ra \ }\tip$ of the
low complexity description logic $\sroel$, which underlies the OWL EL ontology language.
The extension involves a typicality operator $\tip$, whose semantics is based on Lehmann and Magidor's ranked models
and allows for the definition of defeasible inclusions. 
We consider both rational entailment and minimal entailment. 
We  show that deciding instance checking under minimal entailment is in general $\Pi^P_2$-hard,
while, under rational entailment, instance checking can be computed in polynomial time.
We develop a Datalog calculus for instance checking under rational entailment
and exploit it, with stratified negation, for computing the rational closure of simple KBs in polynomial time.
\normalcolor
\end{abstract}

\section{Introduction}

The need for extending Description Logics (DLs) with nonmonotonic features has led, in the last decade,
to the development of several extensions of DLs, obtained by combining them with the most well-known formalisms for nonmonotonic reasoning
\cite{BaaderH92,Straccia93,baader95b,donini2002,lpar2007,Eiter2008,kesattler,sudafricaniKR,bonattilutz,casinistraccia2010,rosatiacm,Bonatti2011,KnorrECAI12,CasiniDL2013,AIJ,bonattiAIJ15,AIJ15}
to deal with defeasible reasoning and inheritance,
to allow for prototypical properties of concepts and to combine
DLs with nonmonotonic rule-based languages 
under the answer set semantics
\cite{Eiter2008}, the well-founded semantics \cite{Eiter2011}, the MKNF semantics \cite{rosatiacm,KnorrECAI12}, as well as in Datalog +/- \cite{Gottlob14}.
Systems integrating Answer Set Programming (ASP) \cite{GelfondLeone02,Gelfond} and  DLs have been developed \cite{Xiao2012}. 

In this paper we study a rational extension of the logic $\sroel$, introduced by Kr\"{o}tzsch \cite{KrotzschJelia2010}.
It is a  low-complexity DL of the $\el$ family \cite{rifel}
that includes local reflexivity, role conjunction and concept products and is at the basis of OWL 2 EL.
The rational extension of $\sroel$ is based on  Kraus, Lehmann and Magidor (KLM) preferential semantics \cite{KrausLehmannMagidor:90},
and, specifically, on ranked models \cite{whatdoes}. We call the logic $\sroelrt$ and we define notions of rational and minimal entailment for it. Also, we develop a Datalog calculus for instance checking and subsumption under rational entailment and exploit it to construct the rational closure of a knowledge base using {\em stratified} negation. 

The semantics of ranked interpretations for DLs was first studied  in \cite{sudafricaniKR}, where a rational extension of $\alc$ is developed
allowing for defeasible concept inclusions of the form $C \utilde{\sqsubset} D$.
In this work, following  \cite{FI09,AIJ15}, 
we extend the language of $\sroel$ with typicality concepts of the form $\tip(C)$,
whose instances are intended to be the typical $C$ elements.
Typicality concepts can be used to express defeasible inclusions of the form $\tip(C) \sqsubseteq D$
(``the typical $C$ elements are $D$'').
Here, however, as in \cite{Booth15,ISMIS2015}, 
we allow for typicality concepts to freely occur in concept inclusions. 
In this respect, the language with typicality that we consider is more general than the language in
 \cite{AIJ15}, where $\mathit{\tip}$ may only occur on the left hand side of
inclusions 
as well as in assertions.
For this language, we define a Datalog translation for $\sroelrt$ which builds on the materialization calculus
in \cite{KrotzschJelia2010}, and, for reasoning about typicality, exploits the properties of ranked models, 
by suitably encoding the typicality operator and its properties.
We show that instance checking for $\sroelrt$ can be computed in polynomial time under the rational entailment.
Following the approach developed in \cite{KrotzschJelia2010} for $\sroel$, the materialization calculus is used to define a Datalog calculus for subsumption in  $\sroelrt$.

The rational closure of a knowledge base has been introduced by  Lehmann and Magidor \cite{whatdoes} to allow for stronger inferences
with respect to preferential and rational entailment, \normalcolor
 and several constructions of rational closure have been proposed for $\alc$ \cite{casinistraccia2010,CasiniJAIR2013,CasiniDL2013,AIJ15,TesiMoodley2016}. 
Such constructions are defined for knowledge bases containing strict or defeasible inclusions
and, for the construction in  \cite{AIJ15}, which allows for defeasible inclusions of the form $\tip(C) \sqsubseteq D$
(where $C$ and $D$ are $\alc$ concepts), minimal canonical ranked models have been shown to provide a semantic characterization of rational closure
for $\alc$, thus generalizing to DLs the canonical model result in \cite{whatdoes}.
Based on the same construction introduced in \cite{AIJ15} for $\alc$,
in this paper we use the Datalog calculus for $\sroelrt$ plus stratified negation to compute
in polynomial time the Rational Closure 
for {\em simple} $\sroelrt$ KBs, where the typicality operator only occurs on the left hand side of inclusions.

However, not for all simple knowledge bases the rational closure is consistent. 
Indeed, the minimal canonical model semantics does not provide a general semantic characterization of the rational closure 
for $\sroel$ with typicality, as a KB may have alternative minimal canonical models with incompatible rankings,
or no canonical model at all.
In particular, we show that instance checking in $\sroelrt$ under the minimal canonical model semantics is \textsc{$\Pi^p_2$}-hard.
We observe that, even in cases when the KB has no minimal canonical model, the rational closure, 
when consistent, may still provide meaningful consequences.

A preliminary version of some results in the paper
appeared in  \cite{DL2016}, and a version including the calculus for rational entailment in \cite{CILC2016}.
In this paper we also exploit the calculus for computing the rational closure of a (simple) $\sroelrt$ KB in polynomial time.
Furthermore, we provide a \textsc{$\Pi^P_2$} lower bound on the complexity of instance checking under the minimal canonical model semantics, thus strengthening the result in \cite{DL2016,CILC2016}.
\normalcolor

\section{A rational extension of $\sroel$}
\label{sec-rat}

In this section we recall the logic $\sroelrt$ introduced in \cite{TPLP2016,CILC2016}
extending the notion of concept in $\sroel$ 
by adding typicality concepts. We refer to \cite{KrotzschJelia2010} for a detailed description of the syntax and semantics of $\sroel$.
%
%
%
%
We let ${N_C}$ be a set of concept names, ${N_R}$ a set of role names
  and ${N_I}$ a set of individual names.  
A concept in $\sroel$ is defined as follows:
\begin{quote}
 $C:= A \tc \top \tc \bot \tc  C \sqcap C \tc \exists R.C \tc \exists S.Self  \tc \{a\}$
 \end{quote}
where $A \in N_C$ and $R,S \in {N_R}$.
We introduce a notion of {\em extended concept} $C_E$ as follows:  
\begin{quote}
$C_E:= C \tc \tip(C) \tc C_E\sqcap C_E \tc \exists S.C_E$
 \end{quote}
\noindent
where $C$ is a $\sroel$ concept.
Hence, any concept of $\sroel$ is also an extended concept; a typicality concept $\tip(C)$ is an extended concept and can occur
in conjunctions and existential restrictions, but it cannot be nested.

A KB is a triple $\mathit{(TBox, RBox, ABox)}$. $\mathit{TBox}$ contains a finite set
of  {\em general concept inclusions} (GCI) $C \sqsubseteq D$, where $C$ and $D$ are extended concepts;
$\mathit{RBox}$ (as in \cite{KrotzschJelia2010}) contains a finite set of {\em role inclusions} of the form $S \sqsubseteq T$,
{\em generalized role inclusions} of the form
$R \circ S \sqsubseteq T$,
{\em role conjunction axioms}
$R \sqcap S \sqsubseteq T$ and
{\em concept product axioms}  $C \times D \sqsubseteq T$ and
$R \sqsubseteq C \times D$, where $C$ and $D$ are extended concepts, and $R,S,T$ are role names  in $N_R$.
$\mathit{ABox}$ contains {\em individual assertions} of the form $C(a)$ and $R(a,b)$, where $a, b \in N_I$, $R \in N_R$ and $C$ is an extended concept.
Restrictions are imposed on the use of roles as in \cite{KrotzschJelia2010}
(and, in particular, all the roles occurring in
{\em Self} concepts and in role conjunctions must be {\em simple roles}, roughly speaking,  roles which do not include the composition of other roles).

We define a semantics for  $\sroelrt$  based on ranked models \cite{whatdoes}.
As done in \cite{AIJ15} for $\alc$, we define the semantics of $\sroelrt$ by 
adding to  $\sroel$ interpretations \cite{KrotzschJelia2010} a \emph{preference relation} $<$ on
the domain, which is intended to compare the ``typicality''
of domain elements.
The typical instances of a concept $C$, i.e., the instances of
$\tip(C)$, are the instances of $C$ that are minimal with respect
to $<$. 
The properties of the $<$ relation are defined in agreement with the properties of the preference relation
in Lehmann and Magidor's {\em ranked models} \cite{whatdoes}.
A semantics for DLs with defeasible inclusions based on ranked models was first proposed in \cite{sudafricaniKR}.
\begin{definition}
\label{semalctr}
A $\sroelrt$ interpretation $\emme$ is any
structure $\langle \Delta, <, \cdot^I \rangle$ where:
\begin{itemize}
\item
$ \Delta$ is a domain;
 $\cdot^I$ is an interpretation function that maps each
concept name $A\in N_C$ to a set $A^I \subseteq  \Delta$, each role name $R \in N_R$
to  a binary relation $R^I \subseteq  \Delta \times  \Delta$,
and each individual name $a \in N_I$ to an element $a^I \in  \Delta$;
the extension of $\cdot^I$ to complex concepts is defined as usual:
\begin{quote}
$\top^I=\Delta$; \ \ \ \ \ \ \ $\bot^I=\emptyset$; \ \ \ \ \ \ \
$\{a\}^I= \{a^I\}$; \\
$(C \sqcap D)^I$= $C^I \cap D^I$; \\
$(\esiste R.C)^I$= $\{x \in \Delta \tc \exists y \in C^I: (x,y) \in R^I \}$; \\
$(\exists R.Self)^I$= $\{x \in \Delta \tc (x,x) \in R^I\}$; 
\end{quote}
\item
$<$ is an irreflexive, transitive, 
well-founded and modular
relation over $\Delta$;
\item
the interpretation of concept $\tip(C)$ is defined as:
$(\tip(C))^I = Min_<(C^I)$,
where $Min_<(S)= \{u: u \in S$ and $\nexists z \in S$ s.t. $z < u \}$.
\end{itemize}
\end{definition}
\noindent Furthermore, an irreflexive and transitive relation $<$ is  {\em well-founded} if,
for all  $S \subseteq \Delta$, for all $x \in S$, either $x \in Min_<(S)$
or $\exists y \in  Min_<(S)$ such that $y < x$. It is {\em modular} if, for all $x,y,z \in \Delta$, $x <y$ implies $x<z$ or $z<y$.
The well-foundedness condition guarantees that
if, for a non-extended concept $C$, 
there is a $C$ element in $\emme$, then there is a minimal $C$ element in $\emme$
(i.e., $C^I\neq \emptyset$ implies $(\tip(C))^I\neq \emptyset$). 

In the following, we will refer to $\sroelrt$ interpretations as {\em ranked interpretations}.
Indeed, as in  \cite{whatdoes}, modularity in preferential models can be equivalently defined by postulating the existence of
a rank function $k_{\emme}: \Delta \longmapsto \Omega$, where $\Omega$ is a totally ordered set.
The preference relation $<$ can be defined from $k_{\emme}$ as follows:
$x < y$ if and only if $k_{\emme}(x) < k_{\emme}(y)$.
Hence, in the following, we will assume that a rank function $k_{\emme}$ is always associated with any model $\emme$.
We also define the {\em rank $k_{\emme}(C)$ of a concept $C$ in the model $\emme$} as $k_{\emme}(C) = min\{k_{\emme}(x) \tc x \in C^I\}$ (if $C^I=\vuoto$, then
$C$ has no rank and we write $k_{\emme}(C)=\infty$).

The semantics of the typicality operator defined above is the same as the one in \cite{AIJ15} for
$\alctr$.
Similarly to other concept constructors, the typicality operator can be used in TBox and ABox
with different restrictions, depending on the description logic.
In \cite{AIJ15}, $\tip(C)$ can only occur on the left-hand side of concept inclusions
(namely, in typicality inclusions of the form
$\tip(C) \sqsubseteq D$).
Here, we call  {\em simple} KBs the ones which respect this restriction, but, 
as in \cite{Booth15,ISMIS2015},  we also consider the general case with no restrictions on the occurrences
of typicality concepts $\tip(C)$ in concept inclusions and assertions.
Instead, as in $\sroel$, we do not allow negation, union and universal restriction which are allowed in $\alc$.
Given 
an interpretation $\emme$, 
the notions of satisfiability and entailment are defined as usual:

\begin{definition}[Satisfiability and rational entailment]\label{Def-ModelSatTBox-ABox}
An interpretation $\emme=\sx\Delta, <, \cdot^I\dx$ satisfies:\\
$\bullet$ \  a concept inclusion $C \sqsubseteq D$ if \  $C^I \subseteq D^I$;\\
$\bullet$ \   a {role inclusion} $S \sqsubseteq T$ if  \ $S^I \subseteq T^I$;\\
$\bullet$ \  a {generalized role inclusion} $R \circ S \sqsubseteq T$ if \  $R^I \circ S^I \subseteq T^I$\\
$\bullet$ \   a {role conjunction axiom} $R \sqcap S \sqsubseteq T$ if \  $R^I\cap S^I \subseteq T^I$;\\
$\bullet$ \   a {concept product axiom} $C \times D \sqsubseteq T$  if \ $C^I \times D^I \subseteq T^I$;\\
$\bullet$ \  a {concept product axiom} $R \sqsubseteq C \times D$ if \ $R^I \subseteq C^I \times D^I$;\\
$\bullet$ \   an assertion $C(a)$ if $a^I \in C^I$;\\
$\bullet$ \   an assertion $R(a,b)$ if $(a^I,b^I) \in R^I$.

Given  a KB $\mathit{K=(TBox, RBox, ABox)}$, an interpretation $\emme=$$\sx \Delta, <, \cdot^I \dx$ {\em satisfies}
$\mathit{TBox}$  (resp., $\mathit{RBox}$, $\mathit{ABox}$) if
$\emme$ satisfies   all  axioms in $\mathit{TBox}$ (resp., $\mathit{RBox}$, $\mathit{ABox}$),
and we write $\emme \models \mathit{TBox}$ (resp., $\mathit{RBox}$, $\mathit{ABox}$).
An interpretation $\emme = \sx \Delta, <, \cdot^I \dx$ is a {\em model} of $K$ (and we write $\emme \models K$) if
$\emme$ satisfies   all the axioms in $\mathit{TBox}$, $\mathit{RBox}$ and $\mathit{ABox}$.

Let a query $F$ be either a concept inclusion $C\sqsubseteq D$ (where $C$ and $D$ are extended concepts) or an individual assertion.
{\em $F$ is rationally entailed by $K$}, written $K \models_{sroelrt} F$, if for all models $\emme=$$\sx \Delta, <, \cdot^I \dx$ of $K$,
$\emme$ satisfies $F$.
In particular, the {\em instance checking} problem (under rational entailment) is the problem of deciding whether an assertion ($C(a)$, $\tip(C)(a)$ or $R(a,b)$) is rationally entailed by $K$.


\end{definition}

Given the correspondence of typicality inclusions with conditional assertions $C \ent D$,
it can be easily seen
that each ranked interpretation $\emme$
satisfies the following semantic conditions, which are related to Lehmann and Magidor's postulates of
rational consequence relation \cite{whatdoes} reformulated in terms of typicality:

\vspace{-0.3cm}
\begin{tabbing}
$\mathit{(CM)}$ \= xxxxxxxxxx \= \kill \\
$\mathit{(S\mbox{-}LLE)}  ~ \mbox{ If } \emme \models \mathit{A \equiv B}  \mbox{ and }  \emme \models \mathit{\tip(A) \sqsubseteq C} \mbox{ then }   \emme \models \mathit{ \tip(B) \sqsubseteq C} $ \\
$\mathit{(S\mbox{-}RW)}  ~ \mbox{ If } \emme \models  \mathit{B \sqsubseteq  C}  \mbox{ and }      \emme \models \mathit{\tip(A) \sqsubseteq B}    \mbox{ then }   \emme \models \mathit{ \tip(A) \sqsubseteq C} $ \\
$\mathit{(S\mbox{-}Refl)} ~  \emme \models \mathit{\tip(A) \sqsubseteq A } $ \\
$\mathit{(S\mbox{-}And)}  ~ \mbox{ If }  \emme \models \mathit{\tip(A) \sqsubseteq B}  \mbox{ and }   \emme \models \mathit{\tip(A) \sqsubseteq C} \mbox{ then }   \emme \models \mathit{\tip(A) \sqsubseteq  B \sqcap C} $ \\
$\mathit{(S\mbox{-}Or)}  ~ \mbox{ If }  \emme \models \mathit{\tip(A)  \sqsubseteq C}  \mbox{ and }  \emme \models  \mathit{\tip(B) \sqsubseteq C} \mbox{ then }   \emme \models \mathit{\tip(A \sqcup B) \sqsubseteq C} $ \\
$\mathit{(S\mbox{-}CM)}  ~ \mbox{ If }  \emme \models \mathit{\tip(A) \sqsubseteq B}  \mbox{ and }   \emme \models \mathit{\tip(A) \sqsubseteq C} \mbox{ then }  \mathit{\tip(A \sqcap B) \sqsubseteq C} $ \\
$\mathit{(S\mbox{-}RM)}  ~ \mbox{ If }  \emme \models \mathit{\tip(A) \sqsubseteq C}  \mbox{ and }   \emme \not \models \mathit{\tip(A)  \sqsubseteq \neg B} \mbox{ then }  \emme \models \mathit{\tip(A \sqcap B) \sqsubseteq C} $ \\
\end{tabbing}
\vspace{-0.3cm}
It is easy to show that these semantic properties (where the interpretation of $A \sqcup B$, which is not in the language, is the usual one) hold in all the ranked models.
A similar formulation of the semantic properties in terms of defeasible inclusions has been previously provided by Britz et al.\ in \cite{sudafricaniKR},
for the ranked semantics and a proof can be found in \cite{TesiMoodley2016}. Another reformulation of the properties in terms of a selection function semantics was given in \cite{FI09} for the preferential semantics and in \cite{AIJ15} for the rational semantics.
In particular, observe that property $\mathit{(S\mbox{-}RM)}$ can be reformulated as:

if $ \mathit{ (\tip(A) \sqcap B)^I \neq \emptyset}$, then $\mathit{(\tip(A \sqcap B))^I \subseteq (\tip(A))^I}$

\noindent
and, in this form, it is a rephrasing of property $(f_\tip - R)$, in the semantics  with selection function of the operator $\tip$ studied in \cite{AIJ15} 
(Appendix A) for $\alctr$.
This property has a syntactic counterpart in $\sroel$ in the axiom $\exists U. (\tip(A) \sqcap B) \sqcap \tip(A \sqcap B) \sqsubseteq  \tip(A)$,
where $U$ is the universal role ($\top \times \top \sqsubseteq U$),
which holds in all the ranked models.
Observe that this axiom, as well as the property $\mathit{(S\mbox{-}RM)}$, is weaker than
the Lehmann and Magidor's postulate $\mathit{(RM)}$ in \cite{whatdoes}, which would rather be reformulated, for a knowledge base $K$, as:

$\mathit{(RM)}  ~ \mbox{ If }  K \models \mathit{\tip(A) \sqsubseteq C}  \mbox{ and }   K \not \models \mathit{\tip(A)  \sqsubseteq \neg B} \mbox{ then }  K \models \mathit{\tip(A \sqcap B) \sqsubseteq C} $

\noindent
and does not hold in $\sroelrt$ (while it will hold under minimal entailment).

Consider the following example of knowledge base, stating that: typical Italians have black hair; typical students are young;
they hate math, unless they are nerd (in which case they love math); all Mary's friends are typical students.
We also have the assertions stating that  Mary is a student, that Mario is an Italian student, and is a friend of Mary,
Luigi is a typical Italian student, Paul is a typical young student, and Tom is a typical nerd student.
\normalcolor
\begin{example}\label{exa:1}

$\mathit{TBox}$:

$(a) ~ ~ \tip(\mathit{Italian}) \sqsubseteq \mathit{ \exists hasHair.\{Black\}}$

$(b) ~ ~ \tip(\mathit{Student})\sqsubseteq\; \mathit{Young}$ \ \ \ \ \ \ \ \ \ \ \ \ \ \ \ \ \ \ \ \ \ \ \

$(c) ~ ~ \tip(\mathit{Student})\sqsubseteq\; \mathit{MathHater}$

$(d) ~ ~ \tip(\mathit{Student} \sqcap \mathit{Nerd})\sqsubseteq\; \mathit{MathLover}$

$(e) ~ ~ \mathit{\exists hasHair.\{Black\}} \sqcap \mathit{\exists hasHair.\{Blond\}} \sqsubseteq \bot$

$(f) ~ ~ \mathit{MathLover} \sqcap \mathit{MathHater} \sqsubseteq \bot$
\ \ \ \ \ \ \ \ \ \ \

$(g) ~ ~ \exists \mathit{friendOf}.\{\mathit{mary}\} \sqsubseteq \tip(\mathit{Student})$


\smallskip
\noindent
$\mathit{ABox}$: 
$ \mathit{Student}(\mathit{mary}), ~
\mathit{friendOf}(\mathit{mario}, \mathit{mary}), ~ $
$(\mathit{Student} \sqcap \mathit{Italian})(\mathit{mario}), $\\
$ \tip (\mathit{Student} \sqcap \mathit{Italian})(luigi), ~ $
$\tip (\mathit{Student} \sqcap \mathit{Young})(paul),  $
$\tip (\mathit{Student} \sqcap \mathit{Nerd})(tom)  $ 

\end{example}

The fact that concepts $\tip(C)$ can occur anywhere (apart from being nested in a $\tip$ operator)
can be used, e.g., to state that typical working students inherit properties of typical students
($\tip(\mathit{Student} \sqcap \mathit{Worker}) \sqsubseteq \tip(\mathit{Student})$),
in a situation in which typical students and typical workers
have conflicting properties (e.g., as regards paying taxes).
Also, we could state that there are typical students who are Italian:
$\top \sqsubseteq \exists U. \tip(\mathit{Student} \sqcap \mathit{Italian})$.

Standard DL inferences hold for $\tip(C)$ concepts and $\tip(C) \sqsubseteq D$ inclusions.
For instance, we can conclude that Mario is a typical student (by (g)) and young (by (b)).
However, by the properties of defeasible inclusions, Luigi, who is a typical Italian student, and Paul, who is a typical young student,
both inherit the property of typical students of being math haters, respectively, by properties (S-RM) and (S-CM).
Instead, as Tom is a typical nerd student, and typical nerd students are math lovers, this specific property of typical nerd students
prevails over the less specific property of typical students of hating math.
So we can consistently conclude that Tom is a $\mathit{MathLover}$.

A normal form for $\sroelrt$ knowledge bases can be defined.
%
A {KB in $\sroelrt$ is in {\em normal form} if it admits all the axioms of a $\sroel$ KB in normal form:
\begin{center}
$C(a)$ \ \ \ \ \ \ \ $R(a,b)$  \ \ \ \ \ \ \ $A \sqsubseteq \bot$ \ \ \ \ \ \ \ $\top \sqsubseteq C$ \ \ \ \ \ \ \ $A \sqsubseteq \{c\}$  \\
$A \sqsubseteq C$ \ \ \ \ \ $A \sqcap B \sqsubseteq C$ \ \ \ \ \ $\exists R. A \sqsubseteq C$ \ \ \ \ \ $A \sqsubseteq \exists R. B$ \\
$\{a\} \sqsubseteq C$ \ \ \ \ \ $\exists R. \mathit{Self} \sqsubseteq C$ \ \ \ \ \ $A \sqsubseteq \exists R. \mathit{Self}$\\
$R \sqsubseteq T$  \ \ \ \ \ \ \ $R \circ S \sqsubseteq T$  \ \ \ \ \ \ \ $R \sqcap S \sqsubseteq T$  \ \ \ \ \ \ \ $A \times B \sqsubseteq R$  \ \ \ \ \ \ \ $R \sqsubseteq  C \times D $
\end{center}
(where $A,B,C,D \in N_C$, $R,S,T \in N_R$ and $a,b,c \in N_I$)
and, in addition, it admits axioms of the form:
$A\sqsubseteq T(B)$\ \  and \ \ $T(B) \sqsubseteq C$
with $A,B,C \in N_C$.
Extending the results in \cite{rifel} and in \cite{KrotzschJelia2010}, it is easy to see that, given a $\sroelrt$ KB, a semantically equivalent KB in normal form (over an extended signature) can be computed in linear time.
In essence, for each concept $\tip(C)$ occurring in the KB, we introduce two new concept names, $X_C$ and $Y_C$.
A new KB is obtained by replacing all the occurrences of  $\tip(C)$  with $X_C$
in all the inclusions and assertions, and adding the following additional inclusion axioms:
\begin{center}
$X_C \sqsubseteq \tip(Y_C)$, \ \ $\tip(Y_C) \sqsubseteq X_C$, \ \ $Y_C \sqsubseteq C$, \ \  $C \sqsubseteq  Y_C$
\end{center}
Then the new KB undergoes the normal form transformation for $\sroel$ \cite{KrotzschJelia2010}.
The resulting KB is linear in the size of the original one.

\begin{example}
Considering the TBox in Example \ref{exa:1},
inclusion 

$(a) ~ ~ \tip(\mathit{Italian}) \sqsubseteq \mathit{ \exists hasHair.\{Black\}}$\\
is transformed in the following set of inclusions:

$(a_1) ~ ~ X_I \sqsubseteq \mathit{ \exists hasHair.\{Black\}}$ \ \ \ \ \  
$(a_2) ~ ~ X_I  \sqsubseteq \tip(\mathit{Italian})$ \ \ \ \ \ 
$(a_3) ~ ~   \tip(\mathit{Italian}) \sqsubseteq X_I $



\noindent
Inclusion 
$(d) ~ ~ \tip(\mathit{Student} \sqcap \mathit{Nerd})\sqsubseteq\; \mathit{MathLover}$
is mapped to the set of inclusions:

$(d_1) ~ ~ X_{SN}  \sqsubseteq\; \mathit{MathLover}$ \ \ \ \ \ 
$(d_2) ~ ~ X_{SN}  \sqsubseteq\; \tip(\mathit{Y_{SN}})  $ \ \ \ \ \ 
$(d_3) ~ ~\tip(\mathit{Y_{SN}}) \sqsubseteq\; X_{SN}$

$(d_4) ~ ~ \mathit{Student} \sqcap \mathit{Nerd} \sqsubseteq\; Y_{SN}$ \ \ \ \ \ 
$(d_5) ~ ~ Y_{SN}  \sqsubseteq\;  \mathit{Student} \sqcap \mathit{Nerd}  $

\noindent
Then $(a_1)$ is transformed further (the normal form transformation for $\sroel$)  into:
\ \ 
$(a_1') ~  X_I \sqsubseteq \mathit{ \exists hasHair.B}$, and 
$(a_1'') ~ {B \sqsubseteq  \{Black\}}$, while $(d_5)$ is split in two inclusions.

All the other axioms in the TBox, apart from (b) and (c), have to be transformed in normal form.
Assertions are also subject to the normal form transformation.
For instance, $\tip (\mathit{Student} \sqcap \mathit{Nerd})(tom)  $
becomes $X_{SN}(tom)$, where $X_{SN}$ is one of the concept names introduced above.

\end{example}

\section{Minimal entailment} \label{sec:minimal entailment}

In Example \ref{exa:1}, we cannot conclude that all typical young Italians have black hair (and in case Bob is a typical young Italian, he has black hair) using property  (S-RM) above, 
as we do not know whether there is some typical Italian who is also young.
To support such a stronger nonmonotonic inference, a minimal model semantics is needed to select those interpretations where individuals are as typical as possible.
Among models of a KB, we select the minimal ones according to the following {\em preference relation $\prec$ over the set of ranked interpretations}.
An interpretation $\emme = $$\langle \Delta, <, I \rangle$ {\em is preferred to} $\emme' =
\langle \Delta', <', I' \rangle$
($\emme \prec \emme'$) if:
$\Delta = \Delta'$;
 $C^I = C^{I'}$ for all non-extended concepts $C$;
for all $x \in \Delta$, 
$ k_{\emme}(x) \leq k_{\emme'}(x)$, and there exists
$y \in \Delta$ such that $ k_{\emme}(y) < k_{\emme'}(y)$.
We say that an interpretation $\emme = $$\langle \Delta, <, I \rangle$ {\em is a minimal model of $K$} if there is no model $\emme' = $$\langle \Delta', <,' I' \rangle$ of $K$ such that $\emme' \prec \emme$. 

We can see that in all the minimal models of the KB in Example \ref{exa:1}, $luigi$ is an instance of the concept
 $\mathit{ \exists hasHair.\{Black\}}$ and the inclusion $\mathit{\tip (Young \sqcap Italian) \sqsubseteq}$ $\mathit{ \exists hasHair.\{Black\}}$ is satisfied,
as nothing prevents a $\mathit{Young \sqcap Italian}$ individual from having rank $0$.

In particular, we consider the notion of minimal canonical model defined in \cite{AIJ15} to  capture rational closure of an $\alc$ KB extended with typicality.
The requirement of a model to be canonical is used to guarantee that models contain enough individuals.
Given a KB $K$ and a query $F$,  let $\lingconc$ be the set of all the (non-extended) concepts (and subconcepts) occurring  in $K$ or $F$
together with their complements ($\lingconc$ is finite).
In the following, we assume that all concepts occurring in the query $F$ are included in $K$.
\begin{definition}[Canonical models]\label{canonical-model}
A  model $\emme=\sx \Delta, <, I \dx$ of $K$ is
{\em canonical} if,  for each set of $\sroelrt$ concepts
$\{C_1, C_2, \dots,$ $ C_n\} \subseteq \lingconc$ consistent with $K$ (i.e., s.t. $K \not\models_{sroelrt} C_1 \sqcap C_2 \sqcap \dots \sqcap C_n \sqsubseteq \bot$),
there exists (at least) a domain element $x \in \Delta$ such that
$x \in (C_1 \sqcap C_2 \sqcap \dots \sqcap C_n)^I$.
\end{definition}

\begin{definition}
\label{minimal-model}
$\emme$ is a {\em minimal canonical model} of $K$
if it is a canonical model of $K$ and it is minimal 
with respect to the preference relation  $\prec$.
\end{definition}
\begin{definition}[Minimal entailment]\label{minimal-entailment}
Given  
a query $F$,
{\em $F$ is minimally entailed by $K$}, written $K \models_{c\_min} F$ if, for all minimal canonical models $\emme$ 
of $K$, $\emme$ satisfies $F$.
\end{definition}

We can show that instance checking in $\sroelrt$ under minimal entailment is \textsc{$\Pi^P_2$}-hard.
The proof is based on a reduction of the minimal entailment problem of
{\em positive disjunctive logic programs}, which has been proved to be a
 \textsc{$\Pi^P_2$}-hard problem by Eiter and Gottlob in \cite{Eiter95}.
A similar reduction has been used
to prove  \textsc{$\Pi^P_2$}-hardness of entailment for Circumscribed
Left Local $\mathit{EL}^\bot$ knowledge bases in \cite{Bonatti2011},
and in \cite{TPLP2016} to show that  instance checking under the
$\tip$-minimal model semantics (a different semantics)
is a
\textsc{$\Pi^P_2$}-hard problem for $\sroelrt$ knowledge bases.
The reduction in \cite{TPLP2016} (Appendix B  in the ``Supplementary materials") does not work for the minimal canonical model semantics, 
since the resulting knowledge base may have no canonical model, but a simplification of it does work.  

Let us recall the minimal entailment problem of
{\em positive disjunctive logic programs} \cite{Eiter95}.
Let $PV = \{p_1, \ldots, p_n \}$ be a set of propositional variables.
A clause is a formula $l_1 \vee \ldots \vee l_h$, where each literal $l_j$  is either a
propositional variable $p_i$ or its negation $\neg p_i$.
A positive disjunctive logic program (PDLP) is a set of clauses $S= \{\gamma_1, \ldots, \gamma_m\}$,
where each $\gamma_j$ contains at least one positive literal.
A truth valuation for $S$ is a set $I \subseteq PV$,
containing the propositional variables which are true. A truth valuation is a model of $S$
if it satisfies all clauses in $S$.
For a literal $l$, we write $S \models_{min} l$ if and only if every minimal model (with respect to subset inclusion)
of $S$ satisfies $l$. The minimal-entailment problem can be then defined as follows:
given a PDLP $S$ and a literal $l$, determine whether  $S \models_{min} l$.
In the following we sketch the reduction of the minimal-entailment problem for a  PDLP $S$
to the instance checking problem under minimal entailment, from a knowledge base $K$ constructed from $S$.

We define a KB $\mathit{K = (TBox,RBox,ABox)}$ in $\sroelrt$ as follows.
We introduce a concept name $P_h\in N_C$ for each variable $p_h\in PV$ ($h=1,\ldots,n$).
Also, we introduce in $N_C$ an auxiliary concept $H$, a concept name $D_S$ associated with the set of clauses $S$,
and a concept name $D_j$ associated with each clause $\gamma_j$ in $S$ ($j=1,\ldots,m$).
We let $a \in N_I$ be an individual name,
and we define $K$ as follows:
$\mathit{RBox}=\emptyset$,
$\mathit{ABox}=\{ P_h (a), h=1,\ldots,n \} \cup \{\tip(H)(a), D_S(a)\}$,
and $\mathit{TBox}$ contains the following inclusions
(where $ C_i^j$ and $\overline{ C_i^j}$ are concepts associated with each literal $l_i^j$ occurring in $\gamma_j = l_1^j \vee \ldots \vee l_k^j$, as defined below):

(1) $\tip(\top) \sqcap H \sqsubseteq \bot$
\ \ \ \ \ \ \ \ \ \ \ \ \ \ \ \ \ \ \ \ \ \ \ \ \ \ 
(2) $ C_i^j \sqsubseteq D_j$ \ \ \ for all $\gamma_j = l_1^j \vee \ldots \vee l_k^j$ in $S$

(3) $ D_j \sqcap \overline{ C_1^j}  \sqcap \ldots \sqcap \overline{ C_k^j} \sqsubseteq \bot$
\ \ \  for all $\gamma_j = l_1^j \vee \ldots \vee l_k^j$ in $S$

(4) $ D_1  \sqcap \ldots  \sqcap D_m \sqsubseteq D_S$  
\ \ \ \ \ \ \ \ \ \ \ \ \ \ \ \ 
(5) $ D_S \sqsubseteq  D_1  \sqcap \ldots  \sqcap D_m$  

\noindent
for each $h=1,\ldots,n$,
 $j=1,\ldots,m$, 
and where $C_i^j$ and  $\overline{C_i^j}$ (for $i= 1, \ldots, k$) are defined as follows:
\[
   C_i^j=
\begin{cases}
	\tip(P_h) \mbox{\ \ \ \ \ \ \ \ \ \ \ \ \ \ \ \ \ \ \ \ \ \ \ \ \ \  if } l_i^j=p_h\\
    	\exists U.(\tip(\top) \sqcap P_h) \mbox{\ \ \ \ \ \ \ \ \ if } l_i^j=\neg p_h
\end{cases}
\]

\[
   \overline{C_i^j}=
\begin{cases}
	\exists U.(\tip(\top) \sqcap P_h) \mbox{\ \ \ \ \ \ \ \ \ if } l_i^j=p_h\\
    	\tip(P_h) \mbox{\ \ \ \ \ \ \ \ \ \ \ \ \ \ \ \ \ \ \ \ \ \ \ \ \ \  if }  l_i^j=\neg p_h
\end{cases}
\]
where $U$ is the universal role.

Let us consider an arbitrary model $\emme$$=\sx\Delta, <, \cdot^I\dx$ of $K$.
Observe that, by (1), all the $\tip(\top)$ instances are $\neg H$ instances.
Hence, $a^I$ (being a typical $H$) must have rank greater than 0,
and it will have rank 1 in all minimal canonical models.
Inclusions (2) and (3) force the instances of $D_j$ to be the union of the instances of $C_1^j, \ldots, C_k^j$.
Inclusions (4) and (5) force the instances of $D_S$ to be the intersection of the instances of $D_1, \ldots, D_m$.

The minimal canonical models of $K$ satisfying $D_S(a)$ are intended to correspond
to the (propositional) minimal interpretation $J$ satisfying $S$.
Roughly speaking, the concepts $P_h$ such that
$a^I \in (\tip(P_h))^I$ in $\emme$ correspond to the variables $p_h$ true in the interpretation $J$ satisfying $S$.
In any minimal canonical model of $K$, 
either $P_h$ has rank 0 (and $a$ is not a typical $P_h$),
or $P_h$ has rank 1 (and $a$ is a typical $P_h$).
Also, a minimal model of $K$ 
in which the ranking of a set of $P_h$'s is 0, 
is preferred to the models in which the ranking of some of those $P_h$'s is higher (i.e., 1).
This captures the subset inclusion minimality in the interpretations of the positive disjunctive logic program $S$. 
The assertion $D_S(a)$ in $ABox$ is required to select only those interpretations satisfying the set $S$ of disjunctions.
Observe also that $a$ is an istance of $P_h$ for all $h$ (due to the Abox) but it may be a
non-typical instance of $P_h$.

In  any minimal canonical model ${\emme}$ of $K$: 
either $a^I \in (\tip(P_h))^I $ or 
$\mathit{a^I \in  (\exists U.(\tip(\top)}$ $\mathit{ \sqcap \tip(P_h)))^I}$.
Hence, for $a^I$ the two concepts in the definition of $C_i^j$ are disjoint and complementary,
and $\overline{C_i^j}$ is actually the concept representing the complement of $C_i^j$.

\medskip

\begin{proposition}
Given a set $S$ of clauses and a literal $L$,\\
$\; \mbox{\ \ \ \ \ \ \ \ \ \ \ \ \ \ \ \ \  \ \ \ \ \ \ \ \ \ \ \ \ \  \ \ \ \ \ \ \ \ \ \ \ \ \ }$ $S \models_{min}  L$ \ \ \  if and only if \ \ \ \ $K \models_{c\_min} C_L(a)$\\
where $K$ is the KB associated with $S$ as above and $C_L$ is the concept associated with $L$, i.e., $C_L=\tip(P_h)$ if $L=p_h$, and
$C_L=\exists U.(\tip(\top) \sqcap P_h)$ if $L=\neg p_h$.

\end{proposition}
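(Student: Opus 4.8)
The plan is to set up a tight correspondence between the subset-minimal propositional models of $S$ and the minimal canonical models of $K$, reading a valuation off the individual $a$. To each minimal canonical model $\emme=\langle \Delta, <, \cdot^I\rangle$ of $K$ I associate $J_\emme=\{\,p_h : a^I \in (\tip(P_h))^I\,\}$, and I will establish three facts: (a) every minimal canonical $\emme$ induces a subset-minimal model $J_\emme$ of $S$; (b) every subset-minimal model $J$ of $S$ equals $J_\emme$ for some minimal canonical model $\emme$; and (c) for every minimal canonical $\emme$, the assertion $C_L(a)$ holds in $\emme$ exactly when $L$ is satisfied by $J_\emme$. Together these give the stated equivalence, as spelled out at the end.

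I start from the structural facts already observed above. Because $\tip(H)(a)$ and $P_h(a)$ lie in the ABox and inclusion (1) makes every rank-$0$ element a $\neg H$ element, $a^I$ has rank $\geq 1$ and, in a minimal canonical model, rank exactly $1$; moreover $a^I \in P_h^I$ forces $k_{\emme}(P_h)\leq 1$, so each $P_h$ has rank $0$ or $1$. Hence at $a^I$ the concepts $\tip(P_h)$ and $\exists U.(\tip(\top)\sqcap P_h)$ are complementary---the former holds iff $P_h$ has rank $1$ (so $a$ is minimal in $P_h$), the latter iff $P_h$ has a rank-$0$ witness---which is exactly fact (c) read against the two cases defining $C_L$. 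For the model part of (a), the ABox fact $D_S(a)$ with inclusion (5) yields $a^I\in D_j^I$ for all $j$, while (2)--(3) make $D_j$ hold at $a^I$ precisely when some $C_i^j$ does; translating each $C_i^j$ through the dictionary ``$p_h$ true $\Leftrightarrow$ $P_h$ has rank $1$'' says that every clause $\gamma_j$ has a literal satisfied by $J_\emme$, i.e.\ $J_\emme\models S$.

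The heart of the proof links $\prec$ to subset order. Since $\prec$ keeps the domain and every non-extended extension fixed and only lowers ranks, and $a^I$ is locked at rank $1$ by (1), a move $\emme'\prec\emme$ can only create new rank-$0$ members of some $P_h$, i.e.\ turn true variables false; as ranks never rise, $J_{\emme'}\subseteq J_\emme$ whenever $\emme'$ is again a model, which yields the minimality half of (a). For (b) I build, from a subset-minimal $J\models S$, a two-level canonical model $\emme_J$: place $a$ at rank $1$ and realize every other $K$-consistent subset $X\subseteq\lingconc$ at rank $1$ if $X$ contains $H$ or meets some $P_h$ with $p_h\in J$, and at rank $0$ otherwise; then $P_h$ owns a rank-$0$ element exactly when $p_h\notin J$, so $J_{\emme_J}=J$. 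This placement satisfies $K$ and is canonical because no axiom constrains ranks except through $\tip$ and (1), the latter only barring $H$ from rank $0$. Finally $\emme_J$ is $\prec$-minimal: every rank-$1$ element is locked---$a$ and the $H$-elements by (1), and any other because lowering it would put a rank-$0$ member into some true $P_h$, flipping those variables to a proper submodel $J'\subsetneq J$ of $S$, against subset-minimality.

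I expect (b) to be the main obstacle: the delicate point is to check at once that the prescribed two-level ranking meets every GCI together with the canonicity requirement \emph{and} that it is genuinely $\prec$-minimal, playing the rigidity of $\prec$ (identical non-extended extensions) against the subset-minimality of $J$ so as to lock every rank-$1$ element. With (a), (b) and (c) in place the proposition is immediate: $K\models_{c\_min} C_L(a)$ iff every minimal canonical model satisfies $C_L(a)$, iff by (c) $L$ holds under every induced valuation $J_\emme$, iff by (a) and the surjectivity granted by (b) $L$ holds in every subset-minimal model of $S$, that is $S\models_{min}L$.
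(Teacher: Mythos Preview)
Your overall plan and the correspondence $\emme \mapsto J_\emme$ match the paper's approach, and fact (c) together with the ``$J_\emme \models S$'' part of (a) are argued correctly. However, the minimality half of (a) has a genuine gap. You observe that $\emme' \prec \emme$ (both models of $K$) implies $J_{\emme'} \subseteq J_\emme$. This monotonicity goes the wrong way: it does \emph{not} show that a $\prec$-minimal $\emme$ yields a $\subseteq$-minimal $J_\emme$. (A monotone map need not send minimal elements to minimal elements.) What is required is the converse direction: given a proper submodel $J' \subsetneq J_\emme$ of $S$, you must \emph{construct} some model $\emme' \prec \emme$ of $K$. This is precisely where canonicity of $\emme$ enters---it guarantees a domain element $z$ realizing the right pattern of $P_h/\neg P_h$'s (positive exactly on $J_\emme \setminus J'$, say), and lowering $z$ to rank $0$ yields $\emme' \prec \emme$; one then checks that $\emme'$ still satisfies (1)--(5) and the ABox, using that $J'$ models $S$. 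Without this step, (a) does not go through. Note the contrast with your argument for the $\prec$-minimality of $\emme_J$ in (b), where the direction is the correct one: a strictly $\prec$-smaller model of $K$ gives, via your forward correspondence, a strictly smaller model of $S$.

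A secondary point on (b): the claim that ``this placement satisfies $K$ \ldots\ because no axiom constrains ranks except through $\tip$ and (1)'' understates the work. Inclusions (2) and (3) involve the concepts $C_i^j$ and $\overline{C_i^j}$, which are extended (typicality) concepts, so whether they hold at a given domain element depends on the global rank structure; you must verify explicitly that your two-level ranking validates (2)--(3) at \emph{every} realized type, not only at $a$. The paper handles this by building the domain concretely and specifying the interpretation of each $D_j$ and of $D_S$ on every element.
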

\begin{proof}
($\Ri$) We prove that if $K \not \models_{c\_min} C_L(a)$ then  $S \not \models_{min}  L$.
Let $\emme=\sx\Delta, <, \cdot^I\dx$ be a minimal canonical model of $K$ falsifying  $C_L(a)$.
We want to construct a (propositional) interpretation $J$ satisfying S and falsifying $L$.
Let $J$ be the set of all the variables $p_h$ such that $a$ is a typical $P_h$ element, i.e., 
$J = \{ p_h : \; a^I \in \tip(P_h)^I \}$.
We show that $J$ is a minimal model of $S$ that falsifies $L$. The fact that $J$ is a model of $S$
can be easily shown from the fact that $p_h \in J$ iff  $\emme \models  \tip(P_h)(a)$,
while $p_h \not \in J$ iff  $\emme \not \models  \tip(P_h)(a)$ i.e., iff $\emme  \models  \exists U.(\tip(\top) \sqcap P_h)(a)$.
In fact when $\emme \not \models  \tip(P_h)(a)$, 
$P_h$ must have rank lower than $a$, i.e. rank $0$.
Hence, a literal $l_i^j$ is true in $J$ iff $C_i^j(a)$ is satisfied in $\emme$ and it is false in $J$ if $\overline{C_i^j}(a)$ is satisfied in $\emme$.
From the facts that the concepts $C_i^j$ and $\overline{C_i^j}$ are disjoint and complementary for $a$, that inclusions (2)-(5) are satisfied and that
$D_S(a)$ is satisfied as well, it follows that the interpretation $J$ satisfies all the clauses in $S$. 

Consider a clause $\gamma_j$ in $S$. As  $D_S(a)$ is satisfied in $\emme$, $D_j(a)$ must be satisfied as well, by (5).
By (3) it is not the case that $(\overline{C_1^j} \sqcap \ldots \sqcap \overline{C_k^j})(a)$ is satisfied in $\emme$. 
There must be a $C_h^j$ for $h=1,\ldots,k$, such that $C_h^j(a)$ is satisfied in $\emme$. Hence, the literal $l_h^j$ must be satisfied in $J$.
Thus, $\gamma_j$ is satisfied in $J$. This holds for all the clauses $\gamma_j$ in $S$. Hence, $J$ satisfies $S$.

In a similar way it can be seen that, as  $C_L(a)$ is falsified in $\emme$, then the literal $L$ is falsified in $J$.
The minimality of $J$ can be proved 
by contradiction.
If $J$ were not minimal, there would be a model $J'$ of $S$, with $J' \subseteq J$. 
First observe that, for any valuation $I \subseteq PV$ we can define a concept  $C_I$  obtained as the conjunction of the name concepts in the set $\Gamma_I=\{ P_h : \; p_k \in I \} \cup \{ \neg P_h : \; p_k \not \in I \}$ and $C_{I}$ is consistent with $K$. In fact, we can always add to a model of $K$ a new domain element with rank 1 satisfying $C_I$ as well as the inclusions (1)-(5) (by properly defining the evaluation of the concepts $D_j$'s and of $D_S$), to obtain a new model of $K$. In particular, in a canonical model, for each propositional valuation $I \subseteq PV$, 
there must be a domain element which is an instance of $C_I$.
For $J' \subseteq J$,  let $I=J-J'$. There must be an element $z\in \Delta$ of $\emme$ which is an instance of $C_I$ and, furthermore,
$k_{\emme}(z)\geq 1$ (if not, $\tip(P_h)(a)$ would not be satisfiable in $\emme$ for all $p_h \in I=J-J'$, contradicting the fact that, by construction of $J$, for each $p_h \in J$, $\tip(P_h)(a)$ is satisfiable in $\emme$). 
We can define a model $\emme'$ such that  $\emme' \prec \emme$, by only changing the rank of $z$ in $\emme$ to $0$. 
We can easily see that $\emme'$ is still a model of $K$, a model in which, for all $p_h \in J-J'$, $\tip(P_h)(a)$ is not satisfied, but in which 
inclusions (1)-(5) and the ABox assertions are still satisfied (given that $J'$ is a model of $S$ and that $\emme' \models  \tip(P_h)(a)$ iff $p_h \in J'$). 
This contradicts the hypothesis that $\emme$ is minimal.

\noindent
($\Leftarrow$)
We prove that if $S \not \models_{min}  L$ then $K \not \models_{c\_min} C_L(a)$.

Let $J$ be a (propositional) minimal interpretation satisfying $S$ and falsifying $L$.
We  build a  minimal canonical model  of $K$ falsifying $C_L(a)$, as follows.
Let $\emme$$=\sx\Delta, <, \cdot^I\dx$ be defined as follows (where $r=1,2$ and $h=1, \ldots, n$ ):  

$\Delta= \{u, v\} \cup \{x_V^1, x_V^2: \; V \subseteq PV \mbox{ is a propositional valuation}\}$; \ \ \ $a^I= u$;

$k_{\emme}(u)=1$; \ \  $k_{\emme}(v)=0$; \ \ $k_{\emme}(x_V^r)=0$ if $V \cap J= \emptyset$ and $k_{\emme}(x_V^r)=1$ otherwise;
 
$u \in P_h^I$, for all $h=1,\ldots,n$;  $u \in (H \sqcap D_1 \sqcap \ldots D_m \sqcap D_S)^I$;

$v \in P_h^I$ iff $p_h \not \in J$; \   $v \in (\neg H)^I$;

$x_V^r \in P_h^I$ iff $p_h \in V$;

if $k_{\emme}(x_V^r)=0$ then  $x_V^r \in  (\neg H)^I$;

if $k_{\emme}(x_V^1)=1$ then $x_V^1 \in  H^I$;

if $k_{\emme}(x_V^2)=1$ then $x_V^2 \in (\neg H)^I$;

\noindent
Finally, for all $y \in \Delta$ such that $y \neq u$,
for each $j=1,\ldots, m$, we let $y \in D_j^I$ iff some literal $l_i^j$ of $\gamma_j$ is true in $J$;
and, also, $y \in D_S^I$ iff $y$ is an instance of all $D_1, \ldots, D_m$.

It turns out that, for all the variables $p_h \in J$, $u$ is an instance of $\tip(P_h)$ while, 
for all the variables $p_h \not \in J$, $v$ is an instance of $\tip(P_h)$
and all domain elements are instances of $\exists U.(\tip(\top) \sqcap P_h)$.

It is easy to see that $\emme$ is a model of $K$, as it satisfies all the assertions and inclusions in $K$.
Also, $\emme$ is a canonical model of $K$, as all the non-extended concepts occurring in $K$
have an instance in the model. 
To see that $\emme$ falsifies $C_L(a)$, we proceed by cases. Consider the case where $L=p_h$ and $C_L=\tip(p_h)$.
In this case, $p_h \not \in J$ and, by construction, $a^I \not \in \tip(P_h)^I$, so that $\emme$ falsifies $C_L(a)$.
Consider the case when $L=\neg p_h$ and  $C_L=\exists U.(\tip(\top) \sqcap P_h)$. It must be that  $p_h  \in J$ and, by construction,
$a^I  \in \tip(P_h)^I$. 
As there is no domain element with rank $0$ which is an instance of $P_h$, 
$a \not \in (\exists U.(\tip(\top) \sqcap P_h))^I$. Hence, $\emme$ falsifies $C_L(a)$. 

The minimality of $\emme$ among the canonical models of $K$ can be shown by contradiction. Suppose that $\emme$ is not minimal, then there is a canonical model $\emme'$ $=\sx\Delta', <, \cdot^{I'}\dx$ of $K$
(with $\Delta'=\Delta$ and $I'=I$)  such that $\emme' \prec \emme$.
Hence there must be at least one concept $C$ which has rank 1 in $\emme$ and rank $0$ in $\emme'$.
In particular, some concept $P_h$, with $h=1,\ldots, n$, must have  rank 1 in $\emme$ and rank $0$ in $\emme'$.
In fact, the concept $H$ cannot have rank $0$ in any model of $K$ (by inclusion (1)), 
and the interpretation of all other concepts is determined by the interpretation of the $P_h$'s (given inclusions (2)-(5)).
Let $\{P_{j_1}, \ldots,P_{j_r} \}$ be the set of concepts having rank $1$ in $\emme$ and rank $0$ in $\emme'$, while all other concepts $P_h$  keep the same rank in $\emme'$ as in $\emme$. 
It is easy to see that, as $\emme'$ is a model of $K$, it is possible to construct from $\emme'$ a model $J'$ of $S$ such that $J' \subseteq J$, 
thus contradicting the hypothesis that $J$ is a minimal model of $S$.
\normalcolor
\end{proof}
\noindent
From the reduction above and the fact that minimal entailment for PDLP is
 \textsc{$\Pi^P_2$}-hard \cite{Eiter95}, it follows:
\begin{theorem}
Minimal entailment under minimal canonical model semantics is $\Pi^P_2$-hard.
\end{theorem}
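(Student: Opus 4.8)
The plan is to obtain the theorem as an immediate consequence of the Proposition together with the known complexity of the source problem. The Proposition already furnishes, for every positive disjunctive logic program $S$ and literal $L$, the equivalence $S \models_{min} L$ if and only if $K \models_{c\_min} C_L(a)$, where $K$ and $C_L$ are built from $S$ by the construction given above. Thus all that remains is to check that this construction is a genuine polynomial-time many-one reduction, and then to plug in the hardness of the source problem.

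First I would verify that $K$, $a$, and $C_L$ are computable from $S$ and $L$ in polynomial (indeed linear) time and space. Inspecting the construction: we introduce one concept name $P_h$ per propositional variable, a fixed number of auxiliary concept names ($H$, $D_S$, and one $D_j$ per clause $\gamma_j$), an $\mathit{ABox}$ of size $O(n)$, and a $\mathit{TBox}$ whose axioms $(1)$--$(5)$ contribute a number of inclusions linear in $\sum_j |\gamma_j|$, i.e.\ linear in the size of $S$; the concepts $C_i^j$, $\overline{C_i^j}$ and $C_L$ are each of bounded size. Hence the map $\langle S, L\rangle \mapsto \langle K, C_L(a)\rangle$ is polynomial-time computable.

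Next I would invoke the fact that the minimal-entailment problem for positive disjunctive logic programs is $\Pi^P_2$-hard \cite{Eiter95}. Composing this with the reduction and the equivalence of the Proposition, any decision procedure for instance checking under the minimal canonical model semantics yields one for PDLP minimal entailment within the same polynomial overhead. Therefore instance checking, and with it minimal entailment under the minimal canonical model semantics, is $\Pi^P_2$-hard.

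The substantive difficulty has already been discharged in the Proposition, namely establishing both directions of the equivalence: in particular the $(\Leftarrow)$ direction, where one must exhibit a canonical model of $K$ that is $\prec$-minimal and mirrors a minimal propositional model of $S$, and argue that any strictly $\prec$-smaller canonical model would induce a strictly smaller propositional model of $S$, contradicting minimality. For the theorem itself the only residual obstacle is the bookkeeping confirming that the reduction preserves instances faithfully and runs in polynomial time; given the Proposition, this is routine.
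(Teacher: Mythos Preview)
Your proposal is correct and matches the paper's approach: the theorem is obtained as an immediate corollary of the Proposition together with the $\Pi^P_2$-hardness of minimal entailment for positive disjunctive logic programs from \cite{Eiter95}. The paper's proof is in fact even terser than yours (it omits the explicit check that the reduction is polynomial-time), but your added bookkeeping is appropriate and the argument is the same.
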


It is an open issue whether a similar proof can be given also for simple knowledge bases
(as defined in section \ref{sec-rat}).
For simple KBs in $\alctr$, it was proved in \cite{AIJ15} that all minimal canonical models of the KB
assign the same ranks to concepts,
namely, the ranks determined by the rational closure construction. 
This result also holds for the fragment of $\sroelrt$ included in the language of $\alc$ plus typicality,
which however, does not contain nominals, role inclusions, and other constructs of $\sroel$.
The presence of the new constructs and of role inclusions makes alternative minimal canonical model arise 
with incomparable rankings for concepts. 
Even worst, a KB may have no canonical model.
This makes the adequacy of the minimal canonical model semantics disputable, and in \cite{TPLP2016}
an alternative $\tip$-minimal model semantics has been introduced, which coincides with the minimal canonical model semantics
under some conditions. 
\normalcolor
The existence of alternative minimal models for a KB with free occurrences of typicality in the propositional case was observed
in \cite{Booth15} for Propositional Typicality Logic (PTL).
%
The following is an example KB in $\sroelrt$ with alternative minimal canonical models having incomparable rank assignments.
\begin{example} \label{exa: estensioni multiple}
Let $\mathit{K=(TBox, RBox,ABox)}$, where $\mathit{RBox=ABox= \emptyset}$ and $TBox$ contains the inclusion  $\exists U.(A \sqcap \tip(\top)) \sqcap\exists U.(B \sqcap \tip(\top)) \sqsubseteq \bot$
meaning that it is not the case that an $A$ element and a $B$ element may have both rank 0.
Indeed, in the minimal canonical models of this knowledge base either $A$ has rank $0$ and $B$ rank $1$ or vice-versa.
\end{example}

\section{Deciding rational entailment in polynomial time}\label{Sec:encoding}  \label{sec:polyn}

 While  instance checking in $\sroelrt$ under minimal entailment is \textsc{$\Pi^P_2$}-hard, in this section
we prove that instance checking under rational entailment can be decided in polynomial time
for normalized KBs, by defining a translation of a normalized KB into a set of Datalog rules,
whose grounding is polynomial in the size of the KB.
In particular, we extend the Datalog materialization calculus for $\sroel$, 
proposed by Kr\"{o}tzsch \cite{KrotzschJelia2010},
to deal with typicality concepts and with instance checking under rational entailment  in $\sroelrt$.

The calculus in \cite{KrotzschJelia2010} uses predicates 
$\mathit{inst(a,C)}$ (whose meaning includes: the individual $a$ is an instance of concept name $C$,
see \cite{jeliaReport} for details),
$\mathit{triple(a,R,b)}$ ($a$ is in relation $R$ with $b$), $\mathit{self(a,R)}$
($a$ is in relation $R$ with itself).
To map a ${\cal SROEL}(\sqcap,$ $\times)^{\Ra}\tip$ KB to a Datalog program,
we add predicates to represent that: an individual $a$ is a typical instance of a concept name ($\mathit{typ(a,C)}$);
the ranks of two individuals $a$ and $b$ are the same ($\mathit{sameRank(a,b)}$);
the rank of $a$ is less or equal than the one of $b$ ($\mathit{leqRank(a,b)}$).

Besides the constants for individuals in $N_I$ (which are assumed to be finitely many), the calculus in \cite{KrotzschJelia2010}
exploits auxiliary constants
 $aux^{A\sqsubseteq \exists R.C}$ (one for each inclusion of the form $\mathit{A\sqsubseteq \exists R.C}$) to deal with existential restriction.
We also need to introduce an auxiliary constant $aux_C$
for any concept $\tip(C)$ occurring in the KB or in the query, used as a representative typical $C$, in case
$C$ is non-empty.

Given a normalized KB $\mathit{K = (TBox,RBox,ABox)}$
and query $Q$ of the form $C(a)$ or $\tip(C)(a)$,
where $C$ is a concept name in the normalized KB,
the Datalog program for instance checking 
 in  $\sroelrt$,
i.e.\ for querying whether$\mathit{K \models_{sroelrt} Q}$,
is a program $\Pi(K)$, the union of:
\begin{quote}
1. \ $\Pi_{K}$, the representation of $K$ as a set of Datalog facts, based on the input translation in \cite{KrotzschJelia2010};\\
2. \ $\Pi_{IR}$, the inference rules of the basic calculus in \cite{KrotzschJelia2010};\\
3. \ $\Pi_{RT}$, containing the additional rules for reasoning with typicality in \linebreak $\sroelrt$.
\end{quote}
A query $Q$ of the form $\tip(C)(a)$, or $C(a)$, is mapped to a goal $G_Q$ of the form
$\mathit{typ(a,C)}$, or $\mathit{inst(a,C)}$.
Observe that restricting queries to concept names is not a severe restriction as an arbitrary query $C(b)$ can be replaced by a query $A(b)$ with $A$ a new concept name,
by adding 
$C \sqsubseteq A$ to the TBox \cite{rifel} and, of course,
normalizing this inclusion when normalizing TBox.

We define $\Pi(K)$ in such a way that $G_Q$ is derivable in Datalog from $\Pi(K)$ (written $\Pi(K) \vdash G_Q$)  if and only if $\mathit{K \models_{sroelrt} Q}$.

$\Pi_K$ includes the result of the input translation in section 3 in \cite{KrotzschJelia2010}
where $\mathit{nom(a)}$, $\mathit{cls(A)}$, $\mathit{rol(R)}$ are used for
$\mathit{a \in N_I}$ , $\mathit{A \in N_C}$, $\mathit{R \in N_R}$, and, for example:
\begin{itemize}
\item
 $\mathit{subClass(a,C)}$, $\mathit{subClass(A,c)}$, $\mathit{subClass(A,C)}$ are used for $C(a)$, $A \sqsubseteq \{c\}$, $A \sqsubseteq C$;\\
\item
 $\mathit{subEx(R,A,C)}$ is used for $\exists R . A \sqsubseteq C$ ;
\end{itemize}
and similar statements represent other axioms in the normalized KB.

The following is the additional mapping
for the extended syntax of the \linebreak $\sroelrt$ normal form
(note that no mapping is needed for assertions $\tip(C)(a)$, as they do not occur in a normalized KB):
\vspace{-0.3cm}
\begin{tabbing}
$ \mathit{ \exists R . Self \sqsubseteq C }$ \= \kill  \\
\> $ \mathit{A\sqsubseteq \tip(B)} $  \' $\mapsto  \mathit{supTyp(A,B)} $ \\
\> $ \mathit{\tip(B) \sqsubseteq C} $ \' $\mapsto  \mathit{subTyp(B,C)} $
\end{tabbing}
Also, we need to add $\mathit{top}(\top)$ and $\mathit{cls}(\top)$ to the input translation,
as well as facts $\mathit{auxtc(aux_C,C)}$ to relate $aux_C$ constants to the corresponding concept
for all $C$s such that $\tip(C)$ occurs in the normalized KB, and, additionally, 
$\mathit{auxtc(aux_\top,\top)}$.

$\Pi_{IR}$ contains all the
inference rules from \cite{KrotzschJelia2010}\footnote{Here, $u, v, x, y, z, w$, possibly with suffixes, are variables.}:
\begin{tabbing}
$(10)$ \= \kill \\
$(1) ~ \mathit{inst(x, x) \leftarrow nom(x) }  $\\
$(2) ~ \mathit{self(x, v) \leftarrow  nom(x), triple(x, v, x) } $\\
$(3) ~ \mathit{inst(x, z) \leftarrow top(z), inst(x, z') } $\\
$(4) ~ \mathit{inst(x, y) \leftarrow bot(z), inst(u, z), inst(x, z'), cls(y) } $\\
$(5) ~ \mathit{inst(x, z) \leftarrow subClass(y, z), inst(x, y) }  $\\
$(6) ~ \mathit{inst(x, z) \leftarrow subConj(y1, y2, z), inst(x, y1), }$ $ \mathit{inst(x, y2) } $\\
$(7) ~ \mathit{inst(x, z) \leftarrow subEx(v, y, z), triple(x, v, x'), }$ $\mathit{inst(x', y) } $\\
$(8) ~ \mathit{inst(x, z) \leftarrow subEx(v, y, z), self(x, v), inst(x, y) } $\\
$(9) ~ \mathit{triple(x, v, x') \leftarrow supEx(y, v, z, x'), inst(x, y) } $\\
$(10) ~ \mathit{inst(x', z) \leftarrow supEx(y, v, z, x'), inst(x, y) } $\\
$(11) ~ \mathit{inst(x, z) \leftarrow subSelf(v, z), self(x, v) } $\\
$(12) ~ \mathit{self(x, v) \leftarrow supSelf(y, v), inst(x, y) } $ \\
$(13) ~ \mathit{triple(x, w, x') \leftarrow subRole(v, w), triple(x, v, x') } $ \\
$(14) ~ \mathit{self(x,w) \leftarrow subRole(v, w), self(x, v) }  $ \\
$(15) ~ \mathit{triple(x, w, x'') \leftarrow subRChain(u, v,w), } $ $ \mathit{triple(x, u, x'), triple(x', v, x'') } $ \\
$(16) ~ \mathit{triple(x, w, x') \leftarrow subRChain(u, v,w), self(x, u), } $ $ \mathit{triple(x, v, x') }  $ \\
$(17) ~ \mathit{triple(x, w, x') \leftarrow subRChain(u, v, w), } $ $ \mathit{triple(x, u, x'), self(x', v) }  $ \\
$(18) ~ \mathit{triple(x, w, x) \leftarrow subRChain(u, v,w), self(x, u),  } $ $ \mathit{self(x, v) }  $ \\
$(19) ~ \mathit{triple(x, w, x') \leftarrow subRConj(v1, v2,w),  }  $ $ \mathit{triple(x, v1, x'), triple(x, v2, x') }  $ \\
$(20) ~ \mathit{self(x,w) \leftarrow  subRConj(v1, v2,w), self(x, v1), }  $  $ \mathit{self(x, v2) } $ \\
$(21) ~ \mathit{triple(x, w, x') \leftarrow subProd(y1, y2,w), inst(x, y1), } $ $ \mathit{inst(x', y2) }  $ \\
$(22) ~ \mathit{self(x,w) \leftarrow  subProd(y1, y2,w), inst(x, y1), } $ $ \mathit{inst(x, y2) } $ \\
$(23) ~ \mathit{inst(x, z1) \leftarrow supProd(v, z1, z2), triple(x, v, x') } $ \\
$(24) ~ \mathit{inst(x, z1) \leftarrow supProd(v, z1, z2), self(x, v) } $ \\
$(25) ~ \mathit{inst(x', z2) \leftarrow  supProd(v, z1, z2), triple(x, v, x') } $ \\
$(26) ~ \mathit{inst(x, z2) \leftarrow  supProd(v, z1, z2), self(x, v) } $ \\
$(27) ~ \mathit{inst(y, z) \leftarrow  inst(x, y), nom(y), inst(x, z) } $ \\
$(28) ~ \mathit{inst(x, z) \leftarrow  inst(x, y), nom(y), inst(y, z) } $ \\
$(29) ~ \mathit{triple(z, u, y) \leftarrow inst(x, y), nom(y), triple(z, u, x) } $
\end{tabbing}
Note that ``statements $inst(a,b)$, with $a$ and $b$ individuals, encode equality of $a$ and $b$" \cite{jeliaReport}.

$\Pi_{RT}$, the set of rules to deal with typicality, 
contains rules for $\mathit{supTyp}$ and $\mathit{subTyp}$ axioms, and
rules that deal with the rank of domain elements.
In the rules, $x,y,z,Aux,$ $A,B,C$  are all Datalog variables.
\begin{tabbing}
$(CM)$ \= xxxxxxxxxxxxx \= \kill 
$(\mathit{SupTyp}) ~ \mathit{typ(x,z) \leftarrow supTyp(y,z),inst(x,y) } $ \\
$(\mathit{SubTyp}) ~ \mathit{inst(x,z) \leftarrow subTyp(y,z),typ(x,y)} $ \\
$(\mathit{Refl}) ~ \mathit{inst(x,y) \leftarrow typ(x,y) } $ \\
$(\mathit{A0})  ~ \mathit{typ(Aux,C)  \leftarrow  inst(x,C), auxtc(Aux,C) } $ \\
$(\mathit{A1})  ~ \mathit{leqRank(x,y)  \leftarrow  typ(x,B), inst(y,B) } $ \\  
$(\mathit{A2})  ~ \mathit{sameRank(x,y)  \leftarrow  typ(x,A), typ(y,A) } $ \\   
$(\mathit{A3})  ~ \mathit{ typ(x,B) \leftarrow  sameRank(x,y), inst(x,B), typ(y,B) } $\\  
$(\mathit{B1})  ~ \mathit{sameRank(x,z)  \leftarrow  sameRank(x,y), sameRank(y,z) } $ \\   
$(\mathit{B2})  ~ \mathit{sameRank(x,y)  \leftarrow  sameRank(y,x) } $ \\   
$(\mathit{B3})  ~ \mathit{leqRank(x,y)  \leftarrow  sameRank(y,x) } $ \\   
$(\mathit{B4})  ~ \mathit{leqRank(x,z)  \leftarrow  leqRank(x,y), leqRank(y,z) } $ \\   
$(\mathit{B5})  ~ \mathit{sameRank(x,y)  \leftarrow  leqRank(x,y), leqRank(y,x) } $ \\   $(\mathit{B6})  ~ \mathit{ sameRank(x,y) \leftarrow 
nom(y), inst(x,y) } $
\end{tabbing}

\noindent
Rules $\mathit{(SupTyp,SubTyp)}$ deal with the corresponding statements,
$\mathit{(Refl)}$ corresponds to the reflexivity property (see Section 2),
$\mathit{(A0-A3)}$ encode properties of ranked models:
if there is a $C$ element, there must be a typical $C$ element $\mathit{(A0)}$;
a typical $B$ element has a rank less or equal to the rank of any $B$ element $\mathit{(A1)}$;
two elements which are both typical $A$ elements have the same rank $\mathit{(A2)}$;
if $x$ is a B element and has the same rank as a typical $B$ element, $x$ is also a typical $B$ element $\mathit{(A3)}$.
Rules $\mathit{(B1-B6)}$ define properties of rank order.
In particular, by $\mathit{(B6)}$, two constants that correspond to the same domain element have the same rank.

Observe that the semantic properties of rational consequence relation introduced in Section 2 are enforced by the specification above.
Consider, for instance, $\mathit{(S\mbox{-}CM)}$.
Suppose that 
$\mathit{subTyp(A,B)}$ and $\mathit{subTyp(A,C)}$ are in $\Pi_K$
(as $\mathit{\tip(A) \sqsubseteq B}$, $\mathit{\tip(A) \sqsubseteq C}$ are in $K$)
and that $D$ is a concept name  defined to be equivalent to $A \sqcap B$ in $K$.
Suppose that $\mathit{typ(a,D)}$ holds. 
One can infer
$\mathit{typ(a,A)}$ 
and hence $\mathit{inst(a,C)}$, 
i.e., typical $A \sqcap B$'s inherit from typical $A$'s the property of being $C$'s
(the inference for $Paul$ in Example \ref{exa:1}).
In fact, $\mathit{typ(a,A)}$ is inferred showing that
$a$ (who is a typical $D$ and an $A$, as it is a $D$)
and $aux_A$ (who is a typical $A$, by $\mathit{(A0)}$, and a $D$, since all the typical $A$'s are also $B$'s and hence $A \sqcap B$'s)
have the same rank. 
In fact, using$\mathit{(A1)}$ twice, one can conclude both $\mathit{leqRank(a,aux_A)}$ and
$\mathit{leqRank(aux_A,a)}$ so that, by $\mathit{(B5)}$, $\mathit{sameRank(a,aux_A)}$.
Then, by $\mathit{(A3)}$, we infer $\mathit{typ(a,A)}$. With rule $\mathit{(subTyp)}$,
from $\mathit{typ(a,A)}$ and $\mathit{subTyp(A,C)}$, we conclude $\mathit{inst(a,C)}$.

Reasoning in a similar way, one can see that properties $\mathit{(S\mbox{-}RM)}$ and $\mathit{(S\mbox{-}LLE)}$ are also enforced by the rules above.
For $\mathit{(S\mbox{-}RM)}$:
from the fact that there is a domain element $a$ which is a
$\tip(A)$ and a $C$ element (i.e. $\mathit{typ(a,A)}$ and $\mathit{inst(a,C)}$ hold), and from the fact that there is a $b$
who is a typical $A \sqcap C$ element (i.e. that $\mathit{typ(b,D)}$ holds, for some concept $D$ equivalent to $A \sqcap C$),
we can conclude that $b$ is also a typical $A$ element (i.e. $\mathit{typ(b,A)}$ holds).
Inference in $\sroel$ already takes care of the semantic properties of conjunctive consequences $\mathit{(S\mbox{-}And)}$ and right weakening $\mathit{(S\mbox{-}RW)}$.

\begin{theorem}\label{ASP}
For a $\sroelrt$ KB in normal form $K$, and a query $Q$ of the form $\tip(C)(a)$ or $C(a)$,
$\mathit{K \models_{sroelrt} Q}$
if and only if  $\Pi(K) \vdash G_Q$.
\end{theorem}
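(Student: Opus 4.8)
The plan is to prove the two directions of the biconditional separately: \emph{soundness} ($\Pi(K) \vdash G_Q$ implies $K \models_{sroelrt} Q$) and \emph{completeness} ($K \models_{sroelrt} Q$ implies $\Pi(K) \vdash G_Q$). Throughout I fix the intended reading of the predicates relative to an interpretation $\emme = \langle \Delta, <, \cdot^I \rangle$ and a map $\sigma$ sending each Datalog constant (an individual name, or an auxiliary constant $aux_C$ or $aux^{A \sqsubseteq \exists R.C}$) to a domain element: $\mathit{inst}(x,C)$ is meant to assert $\sigma(x) \in C^I$, $\mathit{typ}(x,C)$ to assert $\sigma(x) \in (\tip(C))^I$, $\mathit{triple}(x,R,y)$ to assert $(\sigma(x),\sigma(y)) \in R^I$, and $\mathit{leqRank}(x,y)$ resp.\ $\mathit{sameRank}(x,y)$ to assert $k_{\emme}(\sigma(x)) \le k_{\emme}(\sigma(y))$ resp.\ $k_{\emme}(\sigma(x)) = k_{\emme}(\sigma(y))$.

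For soundness I would argue by induction on the length of a Datalog derivation that, for \emph{every} model $\emme$ of $K$, there is a map $\sigma$ under which each derived atom holds in $\emme$. The facts of $\Pi_K$ hold by the definition of satisfaction of the axioms of $K$; the inference rules $(1)$--$(29)$ of $\Pi_{IR}$ are already known to be sound for $\sroel$ by Kr\"{o}tzsch's result, so the only genuinely new work is to verify the rules of $\Pi_{RT}$. Rules $(\mathit{SupTyp})$, $(\mathit{SubTyp})$ and $(\mathit{Refl})$ are immediate from the semantics of $A \sqsubseteq \tip(B)$, $\tip(B) \sqsubseteq C$ and from $(\tip(C))^I \subseteq C^I$; $(A0)$ is exactly the well-foundedness consequence $C^I \neq \emptyset \Rightarrow (\tip(C))^I \neq \emptyset$, witnessed by taking $\sigma(aux_C)$ to be a minimal $C$-element; $(A1)$--$(A3)$ follow from $(\tip(C))^I = Min_<(C^I)$ together with the fact that $<$ is induced by the rank function $k_{\emme}$ (two minimal elements of a concept share a rank, and a $B$-element of minimal rank is itself minimal); and $(B1)$--$(B6)$ record the elementary properties of a rank function and of the equality encoded by $\mathit{inst}$ between individuals. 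Hence every derivable atom is true in $\emme$; in particular $\Pi(K) \vdash G_Q$ forces $Q$ to hold in every model, i.e.\ $K \models_{sroelrt} Q$.

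For completeness I would argue contrapositively, building from the least Herbrand model $LM$ of $\Pi(K)$ a single ranked interpretation $\emme_\Pi$ that is a model of $K$ but falsifies $Q$ whenever $G_Q \notin LM$. The domain is the set of constants quotiented by the equality relation encoded by $\mathit{inst}(a,b)$ between individuals (rules $(27)$--$(28)$, $(B6)$); concept and role extensions are read off from the $\mathit{inst}$, $\mathit{triple}$ and $\mathit{self}$ atoms of $LM$. The delicate point is the rank function: the derivable $\mathit{leqRank}$ atoms yield only a preorder (transitive by $(B4)$, with symmetric part $\mathit{sameRank}$ by $(B5)$), which need not be total, whereas a ranked model requires a \emph{modular}, well-founded $<$. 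I would therefore extend this preorder to a total one by a topological-sort / level assignment $k_{\emme_\Pi} : \Delta \to \mathbb{N}$ respecting all derived $\mathit{leqRank}$ constraints, and then verify that the completion is harmless: by construction every element declared typical by $LM$, i.e.\ with $\mathit{typ}(x,C)$, is assigned the least rank among $C$-elements, and conversely the saturation rules $(A0)$--$(A3)$ guarantee that any element placed at the minimal level of $C^I$ already carries a $\mathit{typ}(x,C)$ atom, so that $(\tip(C))^{I} = Min_<(C^{I})$ matches the $\mathit{typ}$ predicate. One then checks, axiom by axiom, that $\emme_\Pi \models K$ (the $\sroel$ axioms by Kr\"{o}tzsch's completeness, the typicality inclusions by $(\mathit{SupTyp})$, $(\mathit{SubTyp})$, $(\mathit{Refl})$), and that $G_Q \notin LM$ means $Q$ fails at $\sigma(a)$ in $\emme_\Pi$.

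The main obstacle is precisely this rank-function construction for the completeness model: one must complete the partial preorder given by $\mathit{leqRank}$ and $\mathit{sameRank}$ into a modular, well-founded ranking \emph{without} creating a spurious minimal element of some concept $C$ that lacks the corresponding $\mathit{typ}(\cdot, C)$ atom, nor losing the minimality of an element that carries it. Establishing that the closure rules $(A0)$--$(A3)$ and $(B1)$--$(B6)$ saturate enough rank information to make this completion consistent with the $\mathit{typ}$ facts---i.e.\ that the Datalog fixpoint already fixes the relative ranks of all the typicality-relevant elements---is the heart of the argument; the remaining verifications reduce to Kr\"{o}tzsch's existing soundness and completeness for the $\sroel$ fragment.
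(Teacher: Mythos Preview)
Your overall structure matches the paper's, and your completeness sketch is close to what the paper does. There is, however, a genuine gap in your soundness argument.

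Your soundness plan fixes, for each model $\emme$ of $K$, a map $\sigma$ from Datalog constants to domain elements and then verifies every derived atom under $\sigma$. This cannot work for the existential auxiliaries $aux^{A \sqsubseteq \exists R.B}$: rule~(9) derives $\mathit{triple}(x,R,aux^{A\sqsubseteq\exists R.B})$ for \emph{every} constant $x$ with $\mathit{inst}(x,A)$, so your $\sigma$ would have to send that single auxiliary to one domain element that is simultaneously an $R$-successor (in $B$) of every $A$-instance in $\emme$. In an arbitrary model no such element need exist. The paper (following Kr\"{o}tzsch) avoids this by associating to each constant $c$ a \emph{concept} $\kappa(c)$ rather than an element: $\kappa(a)=\{a\}$ for $a\in N_I$, $\kappa(aux^{A\sqsubseteq\exists R.B})=B\sqcap\exists R^-.A$, and $\kappa(aux_C)=\tip(C)$. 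One then proves by induction on derivations that $\Pi(K)\vdash\mathit{inst}(c,A)$ implies $K\models_{sroelrt}\kappa(c)\sqsubseteq A$, that $\Pi(K)\vdash\mathit{typ}(c,A)$ implies $K\models_{sroelrt}\kappa(c)\sqsubseteq\tip(A)$, and that $\Pi(K)\vdash\mathit{sameRank}(c,d)$ (resp.\ $\mathit{leqRank}$) forces all $\kappa(c)$- and $\kappa(d)$-instances to share a rank (resp.\ be rank-ordered) in every model. The conclusion for a named individual $a$ then follows because $\kappa(a)=\{a\}$ is a singleton. Note that $\exists R^-$ appears only in the metatheory, not in the object language.

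For completeness your plan is broadly right, with two differences worth flagging. First, the paper does not build the strict order from the derived $\mathit{leqRank}$ atoms; it defines $x<'y$ directly from the data, namely when some $C$ has $\mathit{typ}(x,C),\mathit{inst}(y,C)\in J$ but $\mathit{typ}(y,C)\notin J$, takes the transitive closure $<^+$, proves it is irreflexive (using $(A3)$ and $(B4)$,$(B5)$) and compatible with $\mathit{sameRank}$ and with $\approx$, and only then layers $\Delta$ into $Rank_0,\ldots,Rank_n$. This makes the ``no spurious minimal element'' check you worried about very direct: if $d$ is minimal in $B^I$ in the constructed model but $\mathit{typ}(\prj(d),B)\notin J$, the constant $aux_B$ witnesses $aux_B<'\prj(d)$, contradicting minimality. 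Second, the domain is not simply the quotient of constants by the encoded equality: for each auxiliary constant not $\approx$-equivalent to a named individual, the paper introduces \emph{two} fresh copies ($w^\alpha_1,w^\alpha_2$ and $z_C^1,z_C^2$), a trick inherited from Kr\"{o}tzsch that is needed to handle $\mathit{Self}$ axioms correctly. Aside from these points your completeness outline matches the paper.
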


\begin{proof} 
For completeness, we procede by contraposition, similarly to \cite{jeliaReport}, Lemma 3.
Assume that  $\mathit{inst(a,C)}$ (respectively, $\mathit{typ(a,C)}$) is not derivable from $\Pi(K)$.
Let $J$ be the minimal Herbrand model of the Datalog program $\Pi(K)$; then
$\mathit{inst(a,C)} \not \in J$ (resp. $ \mathit{typ(a,C)} \not \in J$).
From $J$ we  build a ranked model $\emme=(\Delta, <, \cdot^I)$ for $K$
such that  $C(a)$ (respectively, $\tip(C)(a)$) is not satisfied in $\emme$.
As in \cite{jeliaReport}, we can build the domain {\color{black} $\Delta$} of $\emme$ from the set $Const$ including all the name constants $c \in N_I$ occurring
in the ASP program $\Pi(K)$  
as well as all the auxiliary constants, then defining an equivalence relation $\approx$ over constants  as 
the reflexive, symmetric and transitive closure of the relation
$\{ (c,d) \mid inst(c,d) \in J$, for $c \in Const$ and  $d \in N_I\}$.
Let $[c]=\{ d \mid d \approx c \}$ denote the equivalence class of $c$;
we define the domain $\Delta$ of the interpretation $\emme$ as
the set including the equivalence classes $[c]$ for the $c \in N_I$ and, possibly, additional domain elements for auxiliary constants:
$\Delta=  \{[c] \mid c \in N_I\} \cup \{\wARC_1, \wARC_2 \mid$ $inst(\auxARC,e) \in J$ for some $e$ and there is no $d \in N_I$ such that $\auxARC \approx d\}$
 $ \cup \{z_C^1 , z_C^2 \mid$ $inst( aux_C,e) \in J$ for some $e$ and there is no $d \in N_I$ such that $aux_C \approx d\}$.
Two copies of auxiliary constants are introduced, as in \cite{jeliaReport}, to handle $\mathit{Self}$ statements.

For each element $e \in \Delta$, we define a projection $\prj(e)$ to $Const$ as follows:
(i) $\prj([c])=c$;
(ii) $\prj(\wARC_i)=\auxARC$, i=1,2;
(iii) $\prj(z_C^i)=aux_C, i=1,2$. 
$J$ contains all the details about the interpretation of concepts and roles, from which an interpretation $\emme$ can be defined
as follows:\\
- $\forall c \in N_I$, \ $c^I =[c]$;\\
- $\forall d \in \Delta$, \   $d \in A^{I}$ iff $\mathit {inst}(\prj(d), A) \in J$; \\
- $\mathit{\forall d, e \in \Delta}$, \ $\mathit{(d,e) \in R^{I}}$ iff ($\mathit{triple(\prj(d), R, \prj(e)}) \in J$ and  $\mathit{d \neq e}$) or ($\mathit{self(\prj(d), R) \in J}$ and  $\mathit{d = e}$).

We define a relation $<'$ over {\color{black} constants}, 
letting $x<'y$ iff there is a concept name $C$, s.t.
$ \mathit{typ(x,C)}$, $\mathit{inst(y,C)} \in J$ and  $ \mathit{typ(y,C)} \not \in J$;
 we can show that {\color{black} its transitive closure $<^+$} is a strict partial order.
To see that $<^+$ is irreflexive, we procede by contradiction.
If there were a chain $x<' y_1 < \ldots <' y_n = x $, so that  $x <^+ x$,
by definition of $<'$, the facts $\mathit{leqRank(x,y_1)}, \ldots$, $\mathit{leqRank(y_{n-1},x)}$
would be in $J$ and, by applications of (B4) and (B5), $\mathit{sameRank(x,y_1)}$ would also be in $J$.
As $x<' y_1$, there is some concept name $C$, s.t.
$ \mathit{typ(x,C)}, \mathit{inst(y_1,C)} \in J$ and  $ \mathit{typ(y_1,C)} \not \in J$,
but, from $ \mathit{typ(x,C)}, \mathit{inst(y_1,C)}$ and $\mathit{sameRank(x,y_1)}$, by (A3),
$ \mathit{typ(y_1,C)}$ must be in $J$ as well, a contradiction.
Similarly, one can see that $<^+$ is compatible with
the $\mathit{sameRank}$ 
predicate in $J$ (i.e., it is not the case that $a<^+b$ and $sameRank(a,b)$) 
and with the $\approx$ equivalence relation between constants
(i.e., it is not the case that $a<^+b$ and $a \approx b$)
so that 
$<^+$ can be extended to a modular {\color{black} partial } order over the domain $\Delta$ as follows.
\color{black}
First, a partial ordering  over elements in $\Delta$ is defined, letting 
$e < d$ iff $\prj(e)<^+\prj(d)$, for all $e,d \in \Delta$
(where the definition does not depend on the choice of the representative element in a class).
%
Then the elements in $\Delta$ are partitioned into the sets $Rank_0, \ldots, Rank_n$,
where $Rank_i$ (the set of domain elements of rank $i$) is defined by induction on $i$,  as follows:
$Rank_0$ contains all the elements $x \in \Delta$ such that there is no $y \in \Delta$ with $y<x$;
$Rank_i$ contains all the elements $x \in \Delta - (Rank_0 \cup \ldots \cup Rank_{i-1})$
such that there is no $y \in \Delta - (Rank_0 \cup \ldots \cup Rank_{i-1})$ with $y<x$.
We let $n$ be the least integer such that  $\Delta - (Rank_0 \cup \ldots \cup Rank_n)= \emptyset$.
In $\emme$ we let $k_{\emme}(x)= i$ if $x \in Rank_i$, for all $x \in \Delta$ and $i=1,n$.
\normalcolor

It can be shown that $\emme$ is a model of $K$ and it does not satisfy $C(a)$ (respectively, $\tip(C)(a)$).
The proof that $\emme$ is a model of $K$, i.e. it satisfies all the
axioms in KB, is the same as in  \cite{jeliaReport} (see Lemma 2),
apart from the fact that we have to consider the additional axioms $A \sqsubseteq \tip(B)$ and $ \tip(B) \sqsubseteq C$
(observe that an assertion $T(C)(a)$ cannot occur in the ABox, as $K$ is in normal form). 

For $A \sqsubseteq \tip(B)$ in $K$, we have $\mathit{supTyp(A,B)} \in J$.
Let us assume that $d \in A^I$. We want to prove that $d \in (\tip(B))^I$.
By construction  $\mathit {inst}(\prj(d), A) \in J$.
By rule {\em (SupTyp)}, $\mathit {typ}(\prj(d), B) \in J$.
By rule {\em (Refl)}, $\mathit {inst}(\prj(d), B) \in J$, i.e., $d \in B^I$.
Let us assume $k_\emme(d)=h$.

To show that $d$ is a typical $B$,
we have to show that, for all the domain elements $e$ with rank $j<h$,
$e \not \in B^I$.
Suppose, by contradiction, that there were an $e$ such that  $e \in B^I$ and  $k_\emme(e) < k_\emme(d)$.
By construction, it would be that  $\mathit {inst}(\prj(e), B) \in J$ and that $\prj(e) <^+ \prj(d)$.
Hence, there would be a sequence  $\prj(e) <' y_1 < \ldots <' y_n = \prj(d) $ ($n\geq 1$)
and, by definition of $<'$, $\mathit{leqRank(\prj(e), y_1)}$, $\ldots$, $\mathit{leqRank(y_{n-1},\prj(d))}$ would be in $J$.
By (B4) $\mathit{leqRank(\prj(e), \prj(d))}$ would be in $J$.
Also, from $\mathit {inst}(\prj(e), B) \in J$ and $\mathit {typ}(\prj(d), B) \in J$, by (A1), $\mathit{leqRank(\prj(d), \prj(e))} \in J$.
Therefore, by (A2), $\mathit{sameRank(\prj(e),}$  $\mathit{\prj(d))}$ would be in $J$.
This contradicts the fact that $\prj(e) <^+ \prj(d)$ (as $<^+$ must be compatible with {\em sameRank}), and hence it contradicts the hypothesis that $k_\emme(e) < k_\emme(d)$.

For $\tip(B) \sqsubseteq C$ in $K$, we have $\mathit{subTyp(B,C)} \in J$.
Let $d \in (\tip(B))^I$. We have to prove that $d \in C^I$.
Let $k_\emme(d)=h$. 
As $d \in (\tip(B))^I$, $d \in B^I$ and, for all $e \in \Delta$ such that $k_\emme(e)=j<h$, $e \not \in B^I$.
From $d \in B^I$, by the definition of $\emme$, $\mathit {inst}(\prj(d), B) \in J$.

In the case $\mathit {typ}(\prj(d), B) \in J$, by {\em (SubTyp)} $\mathit {inst}(\prj(d), C) \in J$
and, by definition of $\emme$, $d \in C^I$.
We show that the case when $\mathit {typ}(\prj(d), B) \not \in J$ cannot occur.

Suppose by contradiction that $\mathit {typ}(\prj(d), B) \not \in J$.
Consider the auxiliary constant $aux_B$ and let $aux_B^I \in Rank_r$, i.e.  $k_\emme(aux_B^I)=r$.
By rule (A0), from $\mathit {inst}(\prj(d), B) \in J$, we have $\mathit {typ}(aux_B, B) \in J$
and, from $\mathit{ (Refl)}$, $\mathit {inst}(aux_B, B) \in J$.
From $\mathit {typ}(aux_B, B) \in J$ and $\mathit {inst}(\prj(d), B) \in J$, 
as we have assumed $\mathit {typ}(\prj(d), B) \not \in J$, by definition of $<'$, we have $aux_B <' \prj(d)$, and hence  $aux_B <^+ \prj(d)$.
Therefore, by construction of $\emme$, $aux_B^I < d$. 
Also from $\mathit {inst}(aux_B, B) \in J$, by construction, we have $aux^I \in B^I$, which contradicts the hypothesis that $d \in (\tip(B))^I$.

We have proved that $\emme$ is a model of KB.
To conclude the proof, let us consider the query $Q$.
If $Q=C(a)$, $\mathit {inst}(a, C) \not \in J$ then, by definition of $\emme$, $a^I \not \in C^I$, so that $C(a)$ is not satisfied in $\emme$.
If $Q=\tip(C)(a)$, $\mathit {typ}(a, C) \not \in J$. We prove that $a^I \not \in (\tip(C))^I$.
We consider 2 cases: $\mathit {inst}(a, C) \not \in J$ and $\mathit {inst}(a, C)  \in J$.
In the first case, by definition of $\emme$, $a^I \not \in C^I$ and hence $a^I \not \in (\tip(C))^I$.
In the second case, from $\mathit {inst}(a, C) \in J$, by (A0) $\mathit {typ}(aux_C, C)  \in J$.
As $\mathit {typ}(a, C) \not \in J$, by definition of $<'$ we have: $aux_C <' a$.
Hence, $aux_C <^+ a$ and, in $\emme$, $aux_C^I <a^I$. As by (Refl) $\mathit {inst}(aux_C, C)  \in J$,
in $\emme$ $aux_B^I \in C^I$. Therefore, $a^I \not \in (\tip(C))^I$.
This concludes the proof of Completeness.

Proving the soundness of the Datalog encoding requires
showing that, if  $\Pi(K) \vdash G_Q$, for
a query $Q$ of the form $\tip(C)(a)$ or $C(a)$,
then, $Q$ is a logical consequence of $K$.
The proof is similar to the proof of Lemma 1 in \cite{jeliaReport}.
First we associate to each constant $c$ of the Datalog program $\Pi(K)$ a concept expression $\kappa(c)$  as follows:

if $c \in N_I$ then $\kappa(c)= \{c\}$;

if $c = aux^\alpha$, for $\alpha=A \sqsubseteq \exists R. B$,  then $\kappa(c)= B \sqcap \exists R^-.A$;

if $c = aux_C$,  then $\kappa(c)= \tip(C)$.\\
The following statements can be proved by induction on the height of the derivation tree of each atom from the program $\Pi(K)$:

- if $\Pi(K) \vdash\mathit{inst(c,A)}$, for $A \in N_C$, then $K \models_{sroelrt} \kappa(c) \sqsubseteq A$;

- if $\Pi(K) \vdash\mathit{inst(c,d)}$, for $d \in N_I$, then $K \models_{sroelrt} \kappa(c) \sqsubseteq \{d\}$;

- if $\Pi(K) \vdash\mathit{triple(c,R,d)}$, then $K \models_{sroelrt} \kappa(c) \sqsubseteq \exists R. \kappa(d) $;

- if $\Pi(K) \vdash\mathit{self(c,R)}$, for $A \in N_C$, then $K \models_{sroelrt} \kappa(c) \sqsubseteq \mathit{\exists R. Self}$;

- if $\Pi(K) \vdash\mathit{typ(c,A)}$, then $K \models_{sroelrt} \kappa(c) \sqsubseteq \tip(A)$;

- if $\Pi(K) \vdash\mathit{sameRank(c,d)}$ then for all models $\emme$ of $K$, $k_{\emme}(x)=k_{\emme}(y)$, $\forall x \in \kappa(c)$,  $\forall y \in \kappa(d)$;

- if $\Pi(K) \vdash\mathit{leqRank(c,d)}$ then, for all models $\emme$ of $K$, $k_{\emme}(x) \leq k_{\emme}(y)$, $\forall x \in \kappa(c)$,  $\forall y \in \kappa(d)$;

\noindent
where, in all the items above, $\kappa(c)$ and $\kappa(d)$ are nonempty in all the models of $K$.

The proof of the first four items goes through as in the proof of Lemma 1 in \cite{jeliaReport},
with few additions for the first case, as there are additional inference rules (namely {\em (SubTyp)} and {\em (Refl)}) to derive $\mathit{inst(c,A)}$.
As an example, let us consider the case when  $\mathit{inst(c,A)}$ is derived from $\Pi(K)$ by applying rule {\em (SubTyp)}.
Then $\mathit{SubTyp(B,A)}$ and $\mathit{typ(c,B)}$ must be derivable and the height of their derivation tree is lower.
By inductive hypothesis, $K \models_{sroelrt} \kappa(c) \sqsubseteq \tip(B)$. 
Also, as $\mathit{SubTyp(B,A)}$ is in the input translation, $\tip(B) \sqsubseteq A$ must be in $K$.
Therefore, it follows that $K \models_{sroelrt} \kappa(c) \sqsubseteq A$.

If $\mathit{inst(c,A)}$ is derived from $\Pi(K)$ by applying the rule {\em (Refl)},
then $\mathit{typ(c,A)}$ is derived with a lower derivation tree. By inductive hypothesis, $K \models_{sroelrt} \kappa(c) \sqsubseteq \tip(A)$
and, hence, $K \models_{sroelrt} \kappa(c) \sqsubseteq A$.

Let us now consider the fifth item. Assume that $\Pi(K) \vdash\mathit{typ(c,A)}$;
$\mathit{typ(c,A)}$ can be derived using one of the rules {\em (SupTyp), (A0) or (A3)}. 
The first two cases are trivial.
If rule {\em(A3)} is used to derive $\mathit{typ(c,A)}$, then $\mathit{sameRank(c,d), inst(c,A), typ(d,A)}$
are derived  with a lower derivation tree. By inductive hypothesis, $K \models_{sroelrt} \kappa(c) \sqsubseteq A$, \ \ 
$K \models_{sroelrt} \kappa(d) \sqsubseteq \tip(A)$ and, for all the models $\emme$ of $K$,
the rank of all instances of $\kappa(c)$ is the same as the rank of all instances of $\kappa(d)$.
As all the $\kappa(c)$ are $A$ elements and have the same rank as the 
typical $A$ elements, then 
$K \models_{sroelrt} \kappa(c) \sqsubseteq \tip(A)$.
\end{proof}

$\Pi(K)$ contains a polynomial number of rules and exploits a polynomial number of concepts in the size of $K$,
hence instance checking in $\sroelrt$ can be decided in polynomial time using the calculus in Datalog.
The encoding can be processed, e.g., in an ASP solver such as Clingo or DLV
(with the proper capitalization of variables);
computation of the (unique, in this case)
answer set takes a negligible time for KBs with a hundred assertions (half of them with $\tip$).

Exploiting the approach presented in \cite{KrotzschJelia2010},  a version of the Datalog specification
where predicates $\mathit{inst}$, $\mathit{typ}$, $\mathit{triple}$, $\mathit{self}$, 
$\mathit{sameRank}$, $\mathit{leqRank}$
have an additional parameter (and is therefore less efficient than the previous one, although polynomial)
can be used to check subsumption for $\sroelrt$.
Indeed, to check a subsumption $A \sqsubseteq B$ from a KB $K$,
one can check that, for some new individual name  $c$,  $B(c)$ is entailed from the KB $K \cup \{A(c)\}$.
Following \cite{KrotzschJelia2010} the class $B$ can be used 
as the new individual name $c$.
In the ternary predicate $\mathit{inst(a,C,B)}$, the additional parameter $B$ stands for the assumption that the new individual $B$ is an instance of the concept $B$. This is encoded by adding the rule $\mathit{inst(B,B,B)} \leftarrow \mathit{cls(B)}$ to the inference rules (and similarly for the other predicates).

\section{Rational Closure in Datalog plus Stratified Negation}

Given a KB $K$, the model constructed in the completeness proof from the minimal Herbrand Model $J$ of the encoding $\Pi(K)$
is not in general a minimal model of $K$.
Consider the following example. 

\begin{example} \label{exa:sec5}
Let $K=(TBox, RBox, ABox)$, with $\mathit{RBox}=\emptyset$,
$\mathit{ABox}=\{ \tip(D) (a) \}$,
and $\mathit{TBox}$ containing the following inclusions:
(1) $\tip(A) \sqcap \tip(B) \sqsubseteq \bot$, \ 
%
(2) $ A \sqcap B  \sqsubseteq  D$, \ 
(3) $ D \sqsubseteq A $, \ 
(4) $ D \sqsubseteq  B$.
From $\Pi(K)$ we can derive $samerank(aux_D,a)$ and,
from the definition of $<$ we get: $aux_{\top} < aux_A$
(in fact, $\mathit{typ(aux_{\top},\top)}$ and $\mathit{inst(aux_A,\top)}$ are in $J$ while $\mathit{typ(aux_A,\top)\not \in J}$),
$aux_{\top} < aux_B$, $aux_A < aux_D$, $aux_B < aux_D$, $aux_A < a$, $aux_B < a$.
Using the construction in the completeness proof, we would let $\Delta= \{ [aux_{\top}]$, $[aux_A]$, $[aux_B]$, $[aux_D]$, $[a] \}$,
$Rank_0= \{[aux_{\top}] \}$;   $Rank_1= \{[aux_A], [aux_B] \}$ and $Rank_2= \{[aux_D], [a] \}$.
This is not a minimal model of $K$. In fact, we can lower the rank of $[aux_A]$ and $[aux_B]$ to $0$, without changing the evaluation of 
atomic concepts, roles and individual constants (as defined in the model, and in $J$) and we still obtain a model of $K$.
Similarly, the rank of $[aux_D]$ and $[a]$ can be lowered to $1$. The resulting model is a minimal model of $K$.
\end{example}

For simple $\sroelrt$ knowledge bases, i.e., for KBs where the typicality operator only occurs on the left hand side of inclusions in the Tbox, and we assume moreover it does not occur in the Rbox nor in the Abox,
the rational closure of TBox can be obtained,
adapting the construction in \cite{AIJ15} (Definitions 21 and 23).
Such a construction can be reformulated replacing the exceptionality check and the entailment in $\alctr$ with
the ones in $\sroelrt$.
The idea is that of assigning the lowest possible rank to each concept.
A concept with rank $0$ must have instances which do not violate any defeasible inclusions.
If a concept $C$ has a rank higher than $0$, then all its instances necessarily violate some defeasible inclusions.
This concept is {\em exceptional} with respect to $K$, as it cannot be satisfied by the most normal domain elements
in some (minimal and canonical) model of $K$.
The iterative construction of the rational closure builds on this notion of exceptionality, by repeatedly selecting exceptional concepts and their typicality inclusions.

Formally, in \cite{AIJ15}, a concept  $C$ is defined as exceptional for a knowledge base $K$ iff $\tip(\top) \sqsubseteq \neg C$ is rationally entailed by $K$.
As negation is not available in $\sroelrt$,  we reformulate exceptionality as follows:
 {\em a concept $C$ is exceptional for a knowledge base $K$} iff $K \models_{sroelrt} \tip(\top) \sqcap C \sqsubseteq \bot$.

An inclusion $\tip(C) \sqsubseteq D$ is {\em exceptional} for $K$ if $C$ is exceptional for $K$.
The set of inclusions that are exceptional for $K$ is denoted as ${\cal E}(K)$.

Given $K = (Tbox,Rbox,Abox)$, a sequence $E_0=K, E_1, \ldots , E_n$ of knowledge bases can be defined letting, for $i>0$:
$E_i = (Tbox_i,$ $Rbox, Abox)$, where $Tbox_i = {\cal E}(E_{i-1}) \cup \{ C \sqsubseteq D \in Tbox, \tip$ does not occur in $C \} $.
For all $i$, $Tbox_{i-1} \supseteq Tbox_i$. Being $K$ finite, there is a least finite $n \geq 0$ such that, for all $m>n$, $Tbox_m = Tbox_n$ (as special case, they are all $\emptyset$). The construction can then be limited to such $n$, which is reached if $Tbox_{n+1} = Tbox_n$.

A concept $C$ has {\em rank} $i$, i.e., $rank(C)=i$, if $i$ the least number such that $C$ is not exceptional for $E_i$. If $C$ is exceptional for all $E_i$, we set $rank(C) = \infty$.

The {\em rational closure of Tbox} for a knowledge base $K=(Tbox,Rbox,Abox)$ is defined as:
\ 
$\overline{Tbox} = \{ \tip(C) \sqsubseteq D | ~ \mathrm{either} ~ rank(C) < rank (C \sqcap \neg D) ~ 
\mathrm{or} ~ rank(C) = \infty \}
\cup \{ C \sqsubseteq D | K \models_{sroelrt}  C \sqsubseteq D \}$.

Actually, $C \sqcap \neg D$ is not in the language of $\sroelrt$, but as we shall see,
the condition involving it will be reformulated accordingly.  
For instance, in Example \ref{exa:1}, the rational closure construction would assign rank $0$ to concepts $\mathit{Student}$,  $\mathit{Student \sqcap Italian}$, $\mathit{Student \sqcap Young}$, $\mathit{MathHater}$, 
 rank $1$ to $\mathit{Student \sqcap \neg MathHater}$, $\mathit{Student \sqcap Nerd}$, and rank 2 to $\mathit{Student \sqcap Nerd \sqcap \neg MathLover}$. 

In the following we define rules for computing the rational closure construction, using a variant of the calculus in section \ref{sec:polyn} (similar to its variant for subsumption) and stratified negation.
The set of rules to infer whether inclusions of the form
$\tip(C) \sqsubseteq D$, with $C, D \in {N_C}$, 
are in the rational closure of the TBox for a simple $\sroelrt$ knowledge base $K,$ is a set $\Pi_{RT}(K)$ which is the union of:
\begin{itemize}
\item[1] \ $\Pi_K$ and $\Pi_{IR}$ as in section \ref{sec:polyn};
	\item[2]
	\ $\Pi'_{RT}$ which is  $\Pi_{RT}$ in section  \ref{sec:polyn} omitting rule $(\mathit{SupTyp})$, given that $K$ is a simple KB;
	\item[3]
	 \ $\Pi_{RC}$ which contains the rules for rational closure as described below.
\end{itemize} 

$\Pi_{RC}$ contains rules to define exceptionality of a concept $C$ wrt a KB $E_i$, which, 
as said before, means $E_i \models_{sroelrt} \tip(\top) \sqcap C \sqsubseteq \bot$,
for all the concepts $C$ such that $\tip(C)$ occurs in the program or in a typicality subsumption $ \tip(C) \sqsubseteq D$ to be checked.
For all such $Cs$ we assume to have a statement $\mathit{auxtc(aux_C,C)}$, and we
define the set of such $Cs$ with a rule 
$\mathit{t\_cls(C) \leftarrow auxtc(Aux,C)	} $.
\normalcolor
The entailment $E_i \models_{sroelrt} \tip(\top) \sqcap C \sqsubseteq \bot$
is verified checking whether, for a new constant $C$ (the same name of the class is reused 
as constant name, as in the calculus for subsumption in \cite{KrotzschJelia2010} mentioned at the end of section
\ref{sec:polyn}), it holds that 
$E_i \cup \{\tip(\top)(C), C(C) \} \models_{sroelrt} \bot (C)$.
Similarly to the calculus for subsumption, we use a version of the basic calculus defining predicates
$\mathit{inst\_h, self\_h, triple\_h, typ\_h, leqRank\_h, sameRank\_h }$, 
which correspond to the ones in 
the basic calculus in section \ref{sec:polyn}, with two additional parameters, one for the concept $C$ in the hypotheses $\tip(\top)(C), C(C)$ and one for an integer $i$ identifying $E_i$ in the sequence of KBs. Then, the idea is that $\mathit{inst\_h(y,z,C,i)}$ holds when
 $E_i \cup \{\tip(\top)(C), C(C) \} \models_{sroelrt} z(y)$.

$\Pi_{RC}$ includes the following rules for determining the exceptionality of concepts for each $E_i$:
\begin{tabbing}
	$(\mathit{C0})  ~ \mathit{t\_cls(C) \leftarrow auxtc(Aux,C)	} $\\
	$(\mathit{C1})  ~ \mathit{possrank(0..N) \leftarrow upperbound(N)	} $ \\
	$(\mathit{C2})  ~ \mathit{exceptional(C,I) \leftarrow t\_cls(C),possrank(I),cls(Z),inst\_h(C,Z,C,I),bot(Z)} $ \\
	$(\mathit{C3})  ~ \mathit{subTyp(C,D,0) \leftarrow subTyp(C,D)} $ \\
	$(\mathit{C4})  ~ \mathit{subTyp(C,D,I) \leftarrow possrank(I),subTyp(C,D,I-1),exceptional(C,I-1)} $ \\
	$(\mathit{C5})  ~ \mathit{typ\_h(C,\top,C,I) \leftarrow t\_cls(C),possrank(I)} $ \\
	$(\mathit{C6})  ~ \mathit{inst\_h(C,C,C,I) \leftarrow t\_cls(C),possrank(I)} $ 
\end{tabbing}
where: $(\mathit{C1})$ defines the possible rank values for concepts, given that an assertion 
$\mathit{upperbound(n)}$ is added to the input translation $\Pi(K)$, where $\mathit{n}$ is
one more the number of $\tip(C)$ occurring in $\mathit{Tbox}$;
$(\mathit{C2})$ defines exceptionality of $C$ for $E_i$;
$(\mathit{C3,C4})$ define which inclusions $\tip(C) \sqsubseteq D$ belong to ${\cal E}(E_{i-i})$
and then to $E_i$;
$(\mathit{C5,C6})$ provide the assumptions  $\{\tip(\top)(C), C(C) \}$
for reasoning in  $E_i \cup \{\tip(\top)(C), C(C) \}$.

Additionally, $\Pi_{RC}$ contains a version of the rules in $\Pi'_{RT}$ where predicates
$\mathit{inst, }$ $\mathit{self, triple, typ,}$ $\mathit{leqRank, sameRank}$
are replaced by
$\mathit{inst\_h, self\_h, triple\_h, typ\_h,}$ 
$\mathit{ leqRank\_h, sameRank\_h }$ respectively, with two additional
parameters $\mathit{D}$ and $\mathit{I}$; in all rules 
$\mathit{t\_cls(D),possrank(I)}$ is added to the antecedent,
and the rule derived from $(\mathit{subTyp})$ is:
\begin{tabbing}
	$(\mathit{subTypRC})  ~ \mathit{inst\_h(X,C,D,I) \leftarrow t\_cls(D),possrank(I),subTyp(A,C,I),}$\\
	\hspace{4cm}\ \ \ \ \ \ \ \ \ \ \ \ \ \ \ $\mathit{typ\_h(X,A,D,I)} $
\end{tabbing}
i.e., it uses $\mathit{subTyp(A,C,I)}$ rather than $\mathit{subTyp(A,C)}$, given that reasoning is
in $E_i$, not in $K$.

Let  $\Pi^{Pos}_{RT}(K)$ be the set of all the rules in $\Pi_{RT}(K)$ introduced so far, which are all positive Datalog rules.
It is easy to see that the following proposition holds:

\begin{proposition} \label{correttezza_RC}
For a $\sroelrt$ KB in normal form $K$: 
\begin{itemize}
\item[(1)]
$\Pi^{Pos}_{RT}(K) \vdash \mathit{subTyp(C,D,i)}$ iff  $\mathit{\tip(C) \sqsubseteq D \in E_i}$.
\item[(2)]
$\Pi^{Pos}_{RT}(K) \vdash \mathit{inst\_h(C,D,C,i)}$ iff  $\mathit{E_i \models_{sroelrt} \tip(\top) \sqcap C \sqsubseteq D}$ ; 
\item[(3)]
$\Pi^{Pos}_{RT}(K) \vdash \mathit{exceptional(C,i)}$ iff  $\mathit{E_i \models_{sroelrt} \tip(\top) \sqcap C \sqsubseteq \bot}$ (i.e., $C$ is exceptional for $E_i$)
\end{itemize}
\end{proposition}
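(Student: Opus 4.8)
The plan is to prove the three items simultaneously by induction on the level $i$, exploiting that $\Pi^{Pos}_{RT}(K)$ is a \emph{positive} Datalog program: $\Pi^{Pos}_{RT}(K) \vdash \alpha$ then just means that $\alpha$ lies in the unique least Herbrand model, which can be read off stratum by stratum in increasing order of the level parameter. The three predicates sit in one cyclic dependency that is broken across levels: by $(C4)$, $\mathit{subTyp(\cdot,\cdot,i)}$ is produced from level-$(i{-}1)$ data; by $(subTypRC)$ and the $\_h$-copy of $\Pi'_{RT}$, the atoms $\mathit{inst\_h(\cdot,\cdot,\cdot,i)}$ depend only on the $\mathit{subTyp(\cdot,\cdot,i)}$ atoms; and by $(C2)$, $\mathit{exceptional(\cdot,i)}$ depends only on the $\mathit{inst\_h(\cdot,\cdot,\cdot,i)}$ atoms. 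So I would show that, granting (1)--(3) at all levels below $i$, each item holds at level $i$, with (2) and (3) at level $i$ reducing to (1) at level $i$ together with Theorem~\ref{ASP}.

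\textbf{Base case $i=0$.} Here $E_0 = K$. For item (1), rule $(C3)$ yields $\mathit{subTyp(C,D,0)}$ exactly when $\mathit{subTyp(C,D)} \in \Pi_K$, which (as $K$ is in normal form) holds iff $\tip(C) \sqsubseteq D \in K = E_0$. For item (2), rules $(C5)$, $(C6)$ assert the hypotheses $\tip(\top)(C)$, $C(C)$ as $\mathit{typ\_h(C,\top,C,0)}$ and $\mathit{inst\_h(C,C,C,0)}$, and the slice of the $\_h$-predicates with third argument $C$ and fourth argument $0$ is a verbatim copy of the basic calculus of Section~\ref{sec:polyn} run on $E_0 \cup \{\tip(\top)(C), C(C)\}$ (using $\mathit{subTyp(\cdot,\cdot,0)}$, just identified with the typicality inclusions of $E_0$, in place of $\mathit{subTyp}$). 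By Theorem~\ref{ASP} and the subsumption-via-instance-checking reduction recalled at the end of Section~\ref{sec:polyn}, this copy derives $\mathit{inst\_h(C,D,C,0)}$ iff $E_0 \cup \{\tip(\top)(C), C(C)\} \models_{sroelrt} D(C)$, i.e.\ iff $E_0 \models_{sroelrt} \tip(\top) \sqcap C \sqsubseteq D$. Item (3) at level $0$ is the instance $D = \bot$ of item (2), read off through rule $(C2)$, which fires exactly when some derivable $\mathit{inst\_h(C,Z,C,0)}$ has $\mathit{bot(Z)}$.

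\textbf{Inductive step.} Assume (1)--(3) at all levels below $i$. By the construction of the sequence, $\tip(C) \sqsubseteq D \in E_i$ iff $\tip(C) \sqsubseteq D \in {\cal E}(E_{i-1})$, i.e.\ iff $\tip(C) \sqsubseteq D \in E_{i-1}$ and $C$ is exceptional for $E_{i-1}$. Rule $(C4)$ produces $\mathit{subTyp(C,D,i)}$ precisely from $\mathit{subTyp(C,D,i-1)}$ and $\mathit{exceptional(C,i-1)}$, which by the induction hypotheses for (1) and (3) correspond respectively to $\tip(C) \sqsubseteq D \in E_{i-1}$ and to $C$ being exceptional for $E_{i-1}$; this gives item (1) at level $i$. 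For item (2), I would note that the inclusions of $K$ in which $\tip$ does not occur lie in every $Tbox_i$ and are handled by the level-independent rules of $\Pi_K \cup \Pi_{IR}$, while by item (1) at level $i$ the atoms $\mathit{subTyp(\cdot,\cdot,i)}$ encode exactly the typicality inclusions of $E_i$; hence the $(C,i)$-slice of the $\_h$-calculus is again a faithful copy of the basic calculus on $E_i \cup \{\tip(\top)(C), C(C)\}$, and Theorem~\ref{ASP} delivers item (2) as in the base case. Item (3) at level $i$ is once more the $D = \bot$ instance of item (2), obtained through $(C2)$.

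\textbf{Main obstacle.} The delicate point is item (2): one must argue rigorously that the parametrised calculus is a faithful copy of the basic one, namely that rules instantiated with distinct values of the two extra arguments do not interact, so that restricting to a fixed hypothesis concept $C$ and a fixed level $i$ reproduces, atom for atom, the derivations of the basic calculus on $E_i \cup \{\tip(\top)(C), C(C)\}$. This is the same bookkeeping that underlies the subsumption calculus of Section~\ref{sec:polyn}; the only new ingredient is that the typicality inclusions fed to the copy vary with $i$ through $\mathit{subTyp(\cdot,\cdot,i)}$ rather than being fixed, and item (1) at level $i$ is precisely what licenses this substitution. Two further routine checks remain: that the range $0..N$ supplied by $(C1)$ is wide enough (since each $Tbox_i$ is a proper subset of its predecessor until it stabilises, the number of distinct levels is bounded by the number of typicality inclusions, so $N$, one more than that count, suffices), and that the reduction from the instance form $D(C)$ to the inclusion form $\tip(\top) \sqcap C \sqsubseteq D$ is invoked correctly at every level.
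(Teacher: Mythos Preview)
Your proposal is correct and follows essentially the same approach as the paper: induction on the level $i$, handling item (1) via rules $(C3)$/$(C4)$, item (2) by recognising the $(C,i)$-slice of the $\_h$-calculus as a copy of the basic instance-checking calculus on $E_i \cup \{\tip(\top)(C), C(C)\}$ and invoking Theorem~\ref{ASP}, and item (3) as the $D=\bot$ case of item (2) through $(C2)$. The paper's proof is only a sketch and glosses over the inductive step and the ``faithful copy'' bookkeeping you spell out, so your version is simply a more explicit rendering of the same argument.
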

\begin{proof} (Sketch) The proof can be done by induction on $i$. For $i=0$, (1) holds, as by rule (C3) $\mathit{subTyp(C,D,0)}$ is derivable when 
$\mathit{subTyp(C,D)}$ is derivable, i.e, by the input translation, when 
$\mathit{\tip(C) \sqsubseteq D \in K}$. Also, (C3) is the only rule to define $\mathit{subTyp(C,D,0)}$, which is not derivable if 
$\mathit{subTyp(C,D)}$ is not (that is when $\mathit{\tip(C) \sqsubseteq D \not \in K}$). Also, by construction, $E_0=K$. 
Item (2) follows according to the same considerations
given for the calculus for subsumption in \cite{KrotzschJelia2010}. As a difference, here the further hypothesis $\tip(\top)(C)$ 
is added besides the hypothesis $C(C)$ (see rules (C5) and (C6)). 
For item (3), $\mathit{exceptional(C,i)}$ is derivable by rule (C2) if and only if for some $Z$, $ \mathit{inst\_h(C,Z,C,i)}$ and $bot(Z)$ are derivable.
In this case, by (2), $\mathit{E_i \models_{sroelrt} \tip(\top) \sqcap C}$ $\mathit{ \sqsubseteq Z}$. Furthermore, as $bot(Z)$ is derivable,  $\mathit{ Z \sqsubseteq \bot}$
must be in $K$, and hence
$\mathit{E_i \models_{sroelrt}}$ $\mathit{ \tip(\top) \sqcap C \sqsubseteq \bot}$. 
The inductive case can be easily proved.
\end{proof}

$\Pi_{RC}$ also contains the following rules for computing the rank of concepts:
\begin{tabbing}
	$(\mathit{C7})  ~ \mathit{rank(C,0) \leftarrow t\_cls(C),not ~ exceptional(C,0)} $ \\
	$(\mathit{C8})  ~ \mathit{rank(C,I) \leftarrow t\_cls(C),possrank(I),exceptional(C,I-1),}$\\
	\hspace{2.9cm} $\mathit{ not ~ exceptional(C,I)} $ \\
	$(\mathit{C9})  ~ \mathit{newNonEx(I) \leftarrow t\_cls(C), rank(C,I)} $ \\
	$(\mathit{C10})  ~ \mathit{fixp(I) \leftarrow possrank(I),I>0,not ~ newNonEx(I)} $ \\
	$(\mathit{C11})  ~ \mathit{fixp(I) \leftarrow possrank(I), fixp(I-1)} $ \\
	$(\mathit{C12})  ~ \mathit{inf\_rank(C) \leftarrow fixp(I), exceptional(C,I)} $
\end{tabbing}
where, according to the definition of rank, a concept $C$ has rank $0$ if $C$ is not exceptional for $E_0$, i.e., by Proposition  \ref{correttezza_RC}, if
$\mathit{exceptional(C,0)}$ is not derivable (rule (C7)), and similarly for a rank $i>0$ (rule (C8)).
Predicate $newNonEx(i)$ is true if there is some new non exceptional concept at $i$, which was exceptional at $i-1$ 
(i.e. there is some concept with rank $i$). If not, the iteration has reached a fixpoint at $i$ (and $\mathit{fixp(i)}$ holds by (C10)), and the remaining  concepts $C$ which are exceptional for $E_i$, for some $i$ such that $\mathit{fixp(i)}$ holds, have an infinite rank and, by (C12), $\mathit{inf\_rank(C)}$ is derivable.
Clearly it cannot be the case that both $\mathit{inf\_rank(C)}$ and $\mathit{rank(C,i)}$ hold for some $i$.

Finally, for defeasible subsumption queries, we introduce the new predicate $\mathit{inrc}$ $\mathit{(x,y)}$
and the rules:
\begin{tabbing}
	$(\mathit{inrc1})  ~ \mathit{inrc(C,D) \leftarrow t\_cls(C),cls(D),rank(C,I), inst\_h(C,D,C,I)  } $ \\ 
	$(\mathit{inrc2})  ~ \mathit{inrc(C,D) \leftarrow t\_cls(C),cls(D),inf\_rank(C)} $
\end{tabbing}
\vspace{-0.1cm}
which determine whether $\tip(C) \sqsubseteq D$, with $C, D \in {N_C}$, 
is in $\mathit{\overline{Tbox}}$, the rational closure of Tbox for the knowledge base.
Rule $\mathit{(inrc1)}$ is for the case $rank(C) < rank (C \sqcap \neg D)$ and deserves some comment.
Given that $rank(C)=i$, it should be checked that $rank (C \sqcap \neg D) > i$.
$rank (C \sqcap \neg D)$ could either be equal to $i$ or larger. It is equal to $i$ iff $C \sqcap \neg D$ is not exceptional in $E_i$ and, then,
it is $> i$ iff $C \sqcap \neg D$ is exceptional in $E_i$, i.e.,
$E_i \models_{sroelrt} \tip(\top) \sqcap C \sqcap \neg D \sqsubseteq \bot$, i.e.,
$E_i \models_{sroelrt} \tip(\top) \sqcap C \sqsubseteq D$, which, by Proposition  \ref{correttezza_RC},  can indeed be verified by checking that 
$\mathit{inst\_h(C,D,C,I)}$ is derivable.

To avoid the derivation of an atom $\mathit{inrc(C,D)}$, for each defeasible inclusion $\tip(C) \sqsubseteq D$ belonging to the rational closure of $K$, for all $C, D \in {N_C}$, a condition  $\mathit{def\_subs(C,D)}$ can be introduced in the antecedent of rules $(\mathit{inrc1})$ and $(\mathit{inrc2})$, and
facts $\mathit{def\_subs(C,D)}$ added for the subumptions $\tip(C) \sqsubseteq D$ we are interested in checking,
as well as the fact $t\_cls(C)$, in case $\tip(C)$ does not occur in $K$.

Observe that the resulting program $\Pi_{RC}$ contains negated literals and is {\em stratified}  \cite{Przymusinski91},
i.e., the predicates in the program can be partitioned into a finite number of pairwise disjoint sets $P_0,\ldots, P_k$,
in such a way that, for each rule whose head is in $P_i$, each predicate occurring in a positive literal in the body must belong to some $P_j$ with $j\leq i$,
and each predicate occurring in a negative literal in its body must belongs to some $P_j$ with $j<i$.
Indeed, three pairwise disjoint sets of predicates $P_0$, $P_1$ and $P_2$ can be defined , where $P_0$ contains all the predicates in  $\Pi^{Pos}_{RT}(K)$,
$P_1$ contains predicates $\mathit{rank}$ and $\mathit{newNonEx}$, and $P_2$ contains predicates $\mathit{fixp}$, $\mathit{inf\_Rank}$, $\mathit{inrc1}$ and $\mathit{inrc2}$.

The non-disjunctive, stratified Datalog program $\Pi_{RT}(K)$ has a unique perfect model,  coinciding with the unique stable model of the program \cite{Przymusinski91}. When restricted to the predicates in $P_0$, occurring in the positive part $\Pi^{Pos}_{RT}(K)$, the unique stable model of $\Pi_{RT}(K)$ coincides with the minimal model of $\Pi^{Pos}_{RT}(K)$ (i.e. with the set of facts derivable from $\Pi^{Pos}_{RT}(K)$ in Datalog), as  a consequence of a general property of modularized disjunctive Datalog programs in \cite{Eiter97} (Lemma 5.1).
The fact that a concept $C$ has rank $i$ ($\mathit{rank(C,I)}$), as well as the existence of a concept with rank $i$ ($\mathit{newNonEx(I)}$), and the fact that the iteration has reached a fixedpoint at $i$ ($\mathit{fixp(I) }$) can then all be computed stratum by stratum according to the rules (C7)-(C12). Their correctness is evident from their definition, and we omit a formal proof.
For $K$ in Example \ref{exa:sec5}, the program $\Pi_{RT}(K)$ has a unique stable model in which $\mathit{rank(aux_\top,0), rank(aux_A,0), rank(aux_B,0),}$ $\mathit{rank(aux_D,1), rank(a,1)}$ hold.

%
%

Observe that computing the ranks of all concepts $C$ such that $\tip(C)$ occurs in the KB (or in the query) requires a quadratic number
of exceptionality checks in the size of the KB (as the maximum rank value that can be assigned to a concept is bounded by the  number of typicality inclusions in the KB).
Each exceptionality check requires polynomial time, using the above mentioned 
variant of polynomial calculus for subsumption, by a call to predicate ${inst\_h(C,Z,C,I)}$.
As observed, the computation of the ranking can be done stratum by stratum and
requires polynomial time.

The correspondence between the rational closure construction and the canonical minimal model semantics in  \cite{AIJ15}
does not extend to all the constructs in ${\cal SROEL} (\sqcap,$ $\times)^{\Ra}\tip$ and, specifically, the canonical model semantics is not adequate 
for dealing with nominals. 
In particular, there are knowledge bases with no canonical model and knowledge bases with more than one minimal canonical model
(as the knowledge base in Example \ref{exa: estensioni multiple}). 
However, in many cases, the rational closure of a KB with no canonical model is still meaningful.
As an example, observe that the TBox consisting of the rules (a)-(f) in Example \ref{exa:1} is simple, but has no canonical model (in fact, although the concepts
$\mathit{\{black\} \sqcap MathHather}$ and $\mathit{\{black\} \sqcap MathLover}$ are both consistent with $K$, there is no model of $K$ in which they  both have an instance). However, the rational closure of this TBox is consistent, and entails, for instance, that all the typical young Italians have black hair. In particular, the concepts {\em student} and {\em young Italian} are given rank $0$, while {\em nerd student} has rank $1$ and {\em nerd student and math hater} has rank $2$.
A less restrictive semantic requirement to give meaning also to KBs containing nominals has been considered in \cite{TPLP2016},
where $\tip$-minimal models are introduced, corresponding to the minimal models among the ones which contain at least one instance of any consistent concept occurring within the typicality operator in the KB. 
We refer to \cite{TPLP2016} for a detailed description of this semantics and of its relation with the minimal canonical model one. 
The two semantics coincide when minimal canonical models of the KB exist and give to all the concepts occurring in the $\tip$ operator  the same rank  computed by the rational closure construction. 
One can expect that the correspondence among the rational closure and the $\tip$-minimal model semantics extends to a larger fragment of the language including the TBox (a)-(f) above, for which a unique $\tip$-minimal model exists. While in this paper we do not address the issue of determining such a fragment and establishing the correspondence, we notice that checking the consistency of the rational closure provides a simple way to identify the KB for which the rational closure construction is meaningful.
The consistency of the rational closure can be easily determined using the materialization calculus.
The following rules:
\begin{tabbing}
	$(\mathit{SameRank\_rc1})  ~ \mathit{sameRank(A_C,A_D) \leftarrow auxtc(A_C,C),auxtc(A_D,D),}$\\
	\hspace{6cm} \ \  $\mathit{ rank(C,I), rank(D,I). } $ \\
	$(\mathit{LeqRank\_rc2})  ~ \mathit{leqRank(A_C,A_D) \leftarrow auxtc(A_C,C),auxtc(A_D,D),}$\\
	\hspace{5cm} \ \ \ \ \ $\mathit{ rank(C,I),rank(D,J), I<J.} $
\end{tabbing}
are added to $\Pi_{RC}$ to incorporate the information on the ranks computed by the rational closure construction in the materialization calculus.
If the resulting program derives $\mathit{inst(x,A)}$ for any concept $A$ such that $bot(A) \in \Pi_K$, then the rational closure of $K$ is inconsistent,
as  the rank assignment computed by the rational closure does not correspond to any model of the KB
(which can be proved, by a simple generalization of the soundness proof in Theorem \ref{ASP}).

Note that, if the inclusion  $\exists U.(\{a\} \sqcap \tip(\top)) \sqcap\exists U.(\{b\} \sqcap \tip(\top)) \sqsubseteq \bot$ (similar to the inclusion
in the multiple extension Example  \ref {exa: estensioni multiple})
were added to the KB from Example \ref{exa:1}, with the TBox consisting of rules (a)-(f) as above, the computed ranks would not change w.r.t. those given above. However, if $\tip(\{a\}) \sqsubseteq C$ and $\tip(\{b\}) \sqsubseteq D$ were added, 
the rational closure would become inconsistent
as it would assign to both concepts $\{a\}$ and $\{b\}$ a rank, namely $0$, but no model of the KB exists which such ranks.

For a consistent KBs in $\sroelrt$ whose rational closure is inconsistent, two options are available: either to reason (skeptically) on the alternative $\tip$-minimal models of the KB (if any) by exploiting the ASP encoding of the $\tip$-minimal model semantics in \cite{TPLP2016}, or to take the inconsistency as a clue that there are potentially unresolved conflicts in the KB, concerning the inheritance of alternative defeasible inclusions from more general to more specific classes, to be resolved by modifying the KB (an approach adopted for the logic of overriding in \cite{bonattiAIJ15}).

\section{Related Work}

Among the recent nonmonotonic extensions of DLs  are the formalisms for combining DLs with logic programming rules,
such as for instance, \cite{Eiter2008,Eiter2011,rosatiacm,KnorrECAI12} and Datalog +/- \cite{Gottlob14}.
DL-programs \cite{Eiter2008,Eiter2011} support a loose coupling of DL ontologies and rule-based reasoning
under the answer set semantics and the well-founded semantics; rules may contain DL-atoms in their bodies,
corresponding to queries to a DL ontology, which can be modified according to a list of updates.
In \cite{KnorrECAI12} a general DL language is introduced,
which extends ${\cal SROIQ}$ with nominal schemas and epistemic operators according to the MKNF semantics \cite{rosatiacm},
which encompasses prominent nonmonotonic rule languages, including ASP.
In \cite{bonattiAIJ15} a non monotonic extension of DLs is proposed based on a notion of overriding, supporting normality concepts and
enjoying good computational properties, preserving the tractability of low complexity DLs, including ${\el}^{++}$ and $DL$-$lite$.
In \cite{Bozzato14} the CKR framework  is presented; it is
based on {\em SROIQ-RL},
allows for defeasible axioms with local exceptions and exploits a translation to Datalog with negation.
It is shown that instance checking in CKR reduces to (cautious) inference under the answer set semantics.

Preferential extensions of low complexity DLs in the $\cal{EL}$ and DL-lite families have been studied 
in \cite{GiordanoLPNMR09,ijcai2011},
based on preferential interpretations which are not required to be modular,
and tableaux-based proof methods have been developed for them.
In \cite{ijcai2011}, for a preferential extension of $\elbot$
based on a minimal model semantics different from the one in this paper,
it is shown that minimal entailment is \textsc{ExpTime}-hard already for simple KBs,
similarly to what happens for circumscriptive KBs  \cite{Bonatti2011}.

The first notion of rational closure for DLs was defined by Casini and Straccia \cite{casinistraccia2010};
their rational closure construction for $\alc$  directly uses entailment in $\alc$ over a materialization of the KB.
A variant of this notion of rational closure has been studied in \cite{CasiniDL2013}. 
To overcome the limitations of rational closure,
in \cite{CasiniJAIR2013} an approach is introduced based on the combination of rational closure
and \emph{Defeasible Inheritance Networks},  while in  \cite{Casinistraccia2012}  the lexicographic closure introduced by Lehmann \cite{Lehmann95} is extended to description logics.
Furthermore, in \cite{fernandez-gil} an extension of $\alct$ with several typicality operators is proposed, each corresponding to a preference relation,
and in  \cite{GliozziNMR2016}  
a refinement of the rational closure is developed,
where  models are equipped with several preference relations.
Whether the presented approach can be extended to refinements of rational closure will be explored in future work.

In \cite{TPLP2016} a rule based inference method for $\sroelrt$ minimal entailment based on model generation in ASP has been developed;
here we exploit Datalog plus stratified negation to construct the rational closure of a KB. As discussed above, the two approaches are complementary. 
Related approaches are the work 
in \cite{Isberner2016} which characterizes skeptical c-inference as a constraint satisfaction problem,
and the work  
in  \cite{Russo2015}, who presents an inconsistency tolerant semantics for the Description Logic using preference weights and exploit ASP optimization for computing preferred interpretations.

\section{Conclusions}

In this paper we have studied a rational extension $\sroel^{\Ra \ }\tip$ of the
low complexity description logic $\sroel$, which underlies the OWL EL ontology language,
introducing a typicality operator.
For general KBs, 
we have shown that minimal entailment in $\sroelrt$ is \textsc{$\Pi^P_2$}-hard.
When free occurrences of typicality concepts in concept inclusions are allowed,
alternative minimal models may exist with different
rank assignments to concepts.
In \cite{Booth15} this phenomenon has been analyzed in the context of PTL,
considering alternative preference relations over ranked interpretations 
which coincide over simple KBs but, for general ones, define
different notions of entailment satisfying alternative and possibly incompatible postulates.

Building on the calculus for $\sroel$ in Datalog presented in \cite{KrotzschJelia2010},
a calculus for instance checking and subsumption under rational entailment is defined,
showing that these problems can be decided in polynomial time.
A preliminary  version of this result appeared in \cite{DL2016,CILC2016}.
This calculus is extended to provide a polynomial construction of the rational closure of a knowledge base in $\sroelrt$, using Datalog with stratified negation  \cite{Przymusinski91}.
Although the minimal canonical model semantics provides a characterization for rational closure of simple $\sroelrt$ knowledge bases 
on the fragment of the language only including the constructs in $\alc$ (by a result in \cite{AIJ15}), a more general semantic characterization for a wider fragment of the language is still to be developed. In this respect, a promising semantics is the $\tip$-minimal model semantics proposed in \cite{TPLP2016}.

Future work may also include optimizations, based on modularity as in \cite{BonattiSWC15}, of the calculus for rational entailment, 
the development of a multi-preference construction to address the drawbacks of rational closure
as the development of Abox minimization techniques.
A further issue to understand is whether a calculus can be defined also for the preferential extensions of DLs in the $\cal{EL}$ family
studied in \cite{GiordanoLPNMR09,ijcai2011}, 
whose interpretations are not required to be ranked, as well as for the logics in the DL-lite family, 
for which inconsistency tolerant semantics have been developed \cite{Lembo2015,Papini2016}.

Apart from providing a polynomial complexity upper bound, the encoding presented in this paper
is intended to provide a way to integrate the use of $\sroel$ KBs under rational entailment 
with other kinds of reasoning that can be performed in ASP,
and, by modifying the encoding, 
also to allow the experimentation of alternative notions of minimal entailment, as advocated in \cite{Booth15}.
The approach can be integrated with systems like DReW \cite{Xiao2012},
that already exploits the mapping by Kr\"{o}tzsch for OWL 2 EL.

\medskip
{\bf Acknowledgement}. We thank the anonymous referees for their helpful comments
This research has been supported by INDAM - GNCS Project 2016 {\em Ragionamento Defeasible nelle Logiche Descrittive}.

\end{document}